\newcommand{\TODO}[1]{%
\ifmmode
\text{\textcolor{red}{TODO: #1}}
\else
\textcolor{red}{TODO: #1}
\fi
}
\newcommand{\todo}[1]{%
\ifmmode
\text{\textcolor{red}{TODO: #1}}
\else
\textcolor{red}{TODO: #1}
\fi
}
\DeclareMathOperator{\E}{\mathbb E}
\DeclareBoldMathCommand{\w}{w}
\DeclareBoldMathCommand{\s}{s}
\DeclareBoldMathCommand{\x}{x}
\DeclareBoldMathCommand{\G}{G}
\DeclareBoldMathCommand{\X}{X}
\DeclareBoldMathCommand{\A}{A}
\DeclareBoldMathCommand{\a}{a}
\DeclareBoldMathCommand{\e}{e}
\DeclareBoldMathCommand{\b}{b}
\DeclareBoldMathCommand{\u}{u}
\DeclareBoldMathCommand{\g}{g}
\DeclareBoldMathCommand{\z}{z}
\DeclareBoldMathCommand{\y}{y}
\DeclareBoldMathCommand{\h}{h}
\DeclareBoldMathCommand{\I}{I}
\DeclareBoldMathCommand{\one}{1}
\DeclareBoldMathCommand{\l}{l}
\DeclareBoldMathCommand{\t}{t}
\DeclareBoldMathCommand{\v}{v}
\DeclareBoldMathCommand{\r}{r}
\DeclareBoldMathCommand{\s}{s}
\DeclareBoldMathCommand{\q}{q}
\DeclareBoldMathCommand{\h}{h}
\DeclareBoldMathCommand{\bth}{\theta}
\newcommand{\regret}{\mathcal{R}}
\DeclareBoldMathCommand{\S}{S}
\DeclareBoldMathCommand{\c}{c}
\DeclareBoldMathCommand{\np}{\theta}
\DeclareBoldMathCommand{\Mu}{\mu}
\DeclareBoldMathCommand{\TAU}{\tau}
\DeclareBoldMathCommand{\covmat}{\Sigma}
\newcommand{\half}{\tfrac{1}{2}}
\newcommand{\domainw}{\mathcal{W}}
\newcommand{\reals}{\mathbb{R}}
\newcommand{\ball}{\mathbb{B}}
\newcommand{\sphere}{\mathbb{S}}
\newcommand{\sumT}{\sum_{t=1}^T}
\newcommand{\ghatt}{\hat{\g}_t}
\newcommand{\blosst}{\bar{\ell}_t}
\newcommand{\vhat}{\hat{v}}
\newcommand{\gt}{\g_t}
\newcommand{\wtilt}{\tilde{\w}_t}
\newcommand{\algv}{\mathcal{A}_\mathcal{V}}
\newcommand{\algz}{\mathcal{A}_\mathcal{Z}}
\newcommand\inner[2]{\langle #1, #2 \rangle}
\newcommand\doma[1]{\mathcal{#1}}
\newcommand{\otil}{\widetilde{O}}
\newcommand{\ud}{\u}
\newcommand{\tud}{\tilde{\u}}
\newcommand{\edt}{\ell_t}
\newcommand{\edvt}{\ell^{v_t}_t}
\DeclareBoldMathCommand{\Loss}{\ell}
\DeclareMathOperator*{\argmin}{arg\,min}
\newenvironment{interface}[1][htb]
  {\floatname{algorithm}{Interface}\begin{algorithm}[#1]}{\end{algorithm}\floatname{algorithm}{Algorithm}}
\newtheorem{theorem}{Theorem}
\newtheorem{lemma}{Lemma}
\theoremstyle{definition}
\title{Comparator-Adaptive Convex Bandits}
\author{Dirk van der Hoeven \\Leiden University \\\texttt{dirk@dirkvanderhoeven.com} \\
    \and
    Ashok Cutkosky \\
    Boston University \\
    \texttt{ashok@cutkosky.com}\\
    \and 
    Haipeng Luo \\University of Southern California \\
    \texttt{haipengl@usc.edu}}
\begin{document}
\maketitle

\begin{abstract}
We study bandit convex optimization methods that adapt to the norm of the comparator, a topic that has only been studied before for its full-information counterpart. Specifically, we develop convex bandit algorithms with regret bounds that are small whenever the norm of the comparator is small. 
We first use techniques from the full-information setting to develop comparator-adaptive algorithms for linear bandits. Then, we extend the ideas to convex bandits with Lipschitz or smooth loss functions, using a new single-point gradient estimator and carefully designed surrogate losses.

\end{abstract}


\section{Introduction}\label{sec:introduction}

In many situations, information is readily available. For example, if a gambler were to bet on the outcome of a football game, he can observe the outcome of the game regardless of what bet he made. In other situations, information is scarce. For example, the gambler could be deciding what to eat for dinner: should I eat a salad, a pizza, a sandwich, or not at all? These actions will result in different and unknown outcomes, but the gambler will only see the outcome of the action he actually takes, with one notable exception: not eating result in a predetermined outcome of being very hungry.

These two situation are instantiations of two different settings in online convex optimization: the full information setting and the bandit setting. More formally, both settings are sequential decision making problems where in each round $t = 1, \ldots, T$, a learner has to make a prediction $\w_t \in \domainw\subseteq \reals^d$ and an adversary provides a convex loss function $\ell_t: \domainw \rightarrow \reals$. Afterwards, in the full information setting \citep{zinkevich2003} the learner has access to the loss function $\ell_t$, while in the bandit setting \citep{kleinberg2005nearly, flaxman2005online} the learner only receives the loss evaluated at the prediction, that is, $\ell_t(\w_t)$. In both settings the goal is to minimize the regret with respect to some benchmark point $\u$ in hindsight, referred to as the {\it comparator}. More specifically, the regret against $\u$ is the difference between the total loss incurred by the predictions of the learner and that of the comparator:
\begin{equation*}
\regret_T(\u) = \sumT \ell_t(\w_t) - \ell_t(\u).
\end{equation*} 
When the learner's strategy is randomized, we measure the performance by the expected regret $\E\left[\regret_{T}(\u)\right]$.

\renewcommand{\arraystretch}{1.5}
\begin{table*}[t]
   \centering
   \caption{Summary of main results. Regret is measured with respect to the total loss of an arbitrary point $\u \in \reals^d$ in the unconstrained setting, or an arbitrary point $\u \in \domainw$ in the constrained setting with a decision space $\domainw$ contained in the unit ball. $T$ is the total number of rounds, $1/c$ is radius of the largest ball contained by $\domainw$, and $\nu$ is the self-concordant parameter. Both $c$ and $\nu$ are bounded by $O(d)$.}
   \label{tab:results}
   \resizebox{\textwidth}{!}{\begin{tabular}{|c|c|c|}
   \hline
   Loss functions ($L$-Lipschitz)  & Regret for unconstrained settings & Regret for constrained settings \\
   \hline
   \makecell{Linear  (Section~\ref{sec:conlinban})}& $\otil\left(\|\u\|dL\sqrt{T}\right)$ & $\otil\left(\|\u\|cdL\sqrt{T}\right)$ \\
   \hline
   \makecell{Convex  (Section~\ref{sec:unconstrained convex bandits} and \ref{sec:con convex bandits})}& $\otil\left(\|\u\|L\sqrt{d}T^\frac{3}{4}\right)$ & $\otil\left(\|\u\|cL\sqrt{d}T^\frac{3}{4}\right)$ \\
   \hline
   \makecell{Convex and $\beta$-smooth  (Section~\ref{sec:con convex bandits})}& $\otil\left(\max\{\|\u\|^2, \|\u\|\} \beta(dLT)^\frac{2}{3}\right)$ & - \\ 
   \hline
   \end{tabular}}
\end{table*}   

Standard algorithms in both the full information setting and the bandit setting assume that the learner's decision space $\domainw$ is a convex {\it compact} set and achieve sublinear regret against the optimal comparator in this set: $\u = \argmin_{\u^* \in \domainw}\sumT \ell_t(\u^*)$. To tune these standard algorithms optimally, however, one requires knowledge of the norm of the comparator $\|\u\|$, which is unknown. A common work-around is to simply tune the algorithms in terms of the worst-case norm: $\max_{\u \in \domainw} \|\u\|$, assumed to be $1$ without loss of generality. This results in worst-case bounds that do not take advantage of the case when $\|\u\|$ is small.  
For example, when the loss functions are $L$-Lipschitz, classic Online Gradient Descent~\citep{zinkevich2003} guarantees $\regret_T(\u) = O(L\sqrt{T})$ in the full information setting, while the algorithm of~\citep{flaxman2005online} guarantees
$
\E\left[\regret_T(\u)\right] = O(d\sqrt{L}T^{3/4})
$
in the bandit setting, both of which are independent of $\|\u\|$.

Recently, there has been a series of works in the full information setting that addresses this problem by developing {\it comparator-adaptive} algorithms, whose regret against $\u$ depends on $\|\u\|$ for {\it all $\u \in \domainw$ simultaneously} (see for example \citet{mcmahan2014unconstrained, orabona2016coin, foster2017parameter, cutkosky2017online, kotlowski2017scale, cutkosky2018black, foster2018online, jun2019parameter, van2019user}).
These bounds are often never worse than the standard worst-case bounds, but could be much smaller in the case when there exists a comparator with small norm and reasonably small total loss.
Moreover, most of these results also hold for the so-called {\it unconstrained} setting where $\domainw = \reals^d$, that is, both the learner's predictions and the comparator can be any point in $\reals^d$.
For example, \citet{cutkosky2018black} achieve $\regret_T(\u) = \otil(\|\u\|L\sqrt{T})$ for all $\u$, in both the constrained and unconstrained settings, under full information feedback.\footnote{%
Throughout the paper, the notation $\otil$ hides logarithmic dependence on parameters $T$ and $L$.
}
 
While developing comparator-adaptive algorithms is relatively well-understood at this point in the full information setting, to the best of our knowledge, this has not been studied at all for the more challenging bandit setting.
In this work, we take the first attempt in this direction and develop comparator-adaptive algorithms for several situations, including learning with linear losses, general convex losses, and convex and smooth losses, for both the constrained and unconstrained settings.
Our results are summarized in Table~\ref{tab:results}.
Ignoring other parameters for simplicity, for the linear case, we achieve $\otil(\|\u\|\sqrt{T})$ regret (Section~\ref{sec:conlinban});
for the general convex case, we achieve $\otil(\|\u\|T^\frac{3}{4})$ regret in both the constrained and unconstrained setting (Sections~\ref{sec:unconstrained convex bandits} and \ref{sec:con convex bandits});
and for the convex and smooth case, we achieve $\otil\left(\max\{\|\u\|^2, \|\u\|\} \beta(dLT)^\frac{2}{3}\right)$ regret in the unconstrained setting (Section~\ref{sec:unconstrained convex bandits}).
 


In order to achieve our results for the convex case, we require an assumption on the loss, namely that the value of $\ell_t(\0)$ is known for all $t$.\footnote{For the linear case, this clearly holds since $\ell_t(\0)=0$.} 
While restrictive at first sight, we believe that there are abundant applications where this assumption holds. 
As one instance, in control or reinforcement learning problems, $\0$ may represent some nominal action which has a known outcome: not eating results in hunger, or buying zero inventory will result in zero revenue. Another application is a classification problem where the features are not revealed to the learner. For example, end-users of a prediction service may not feel comfortable revealing their information to the service. 
Instead, they may be willing to do some local computation and report the loss of the service's model. Most classification models (e.g. logistic regression) have the property that the loss of the $\0$ parameter is a known constant regardless of the data, and so this situation would also fit into our framework. Common loss functions that satisfy this assumption are linear loss, logistic loss, and hinge loss.

\paragraph{Techniques}
Our algorithms are based on sophisticated extensions of the black-box reduction introduced by~\citet{cutkosky2018black}, which separately learns the magnitude and the direction of the prediction.
To make the reduction work in the bandit setting, however, new ideas are required, including designing an appropriate surrogate loss function and a new one-point gradient estimator with time-varying parameters.
Note that~\citep{cutkosky2018black} also proposes a method to convert any unconstrained algorithm to a constrained one in the full information setting, but this does not work in the bandit setting for technical reasons. 
Instead, we take a different approach by constraining the magnitude of the prediction directly.

\paragraph{Related work} 
As mentioned, there has been a line of recent works on comparator-adaptive algorithms for the full information setting.
Most of them do not transfer to the bandit setting, except for the approach of \citet{cutkosky2018black} from which we draw heavy inspiration. 
To the best of our knowledge, comparator-adaptive bandit algorithms have not been studied before.
Achieving ``adaptivity'' in a broader sense is generally hard for problems with bandit feedback;
see negative results such as~\citep{daniely2015strongly, lattimore2015pareto} as well as recent progress such as~\citep{chen2019new, foster2019model}.

In terms of worst-case (non-adaptive) regret, the seminal work of~\citep{abernethy2008} is the first to achieve $O(\sqrt{T})$ regret for bandit with linear losses,
and~\citep{kleinberg2005nearly, flaxman2005online} are the first to achieve sublinear regret for general convex case.
Over the past decade, the latter result has been improved in many different ways~\citep{agarwal2010optimal, saha2011improved, agarwal2011stochastic, hazan2014bandit}, and regret of order $O(\sqrt{T})$ under no extra assumptions was recently achieved~\citep{bubeck2015bandit, bubeck2016multi, bubeck2017kernel}.
However, these $O(\sqrt{T})$ bounds are achieved by very complicated algorithms that incur a huge dependence on the dimension $d$.
Our algorithms are more aligned with the simpler ones with milder dimension-dependence~\citep{abernethy2008, flaxman2005online, saha2011improved} and achieve the same dependence on $T$ in different cases.
How to achieve comparator-adaptive regret of order $O(\sqrt{T})$ for the general convex case 
is an important future direction.




\section{Preliminaries}\label{sec:prelim}

In this section, we describe our notation, state the definitions we use, and introduce the bandit convex optimization setting formally. We also describe the black-box reduction of \cite{cutkosky2018black} we will use throughout the paper.

\paragraph{Notation and definitions} The inner product between vectors $\g \in \reals^d$ and $\w \in \reals^d$ is denoted by $\langle \w, \g \rangle$. $\reals_+$ denotes the set of positive numbers. The Fenchel conjugate $F^\star$ of a convex function $F$ is defined as $F^\star(\w) = \sup_\g \langle \w, \g \rangle - F(\g)$. $\|\cdot\|$ denotes a norm and $\|\g\|_\star = \sup_{\w: \|\w\|\leq 1} \langle \w, \g \rangle$ denotes the dual norm of $\g$. The Bregman divergence associated with convex function $F$ between points $\x$ and $\y$ is denoted by $B_F(\x\|\y) = F(\x) - F(\y) - \inner{\nabla F(\y)}{ \x- \y}$, where $\nabla F(\x)$ denotes the gradient of $F$ evaluated at $\x$. The unit ball equipped with norm $\|\cdot\|$ is denoted by $\doma{B} = \{\w: \|\w\| \leq 1\}$. The unit sphere with norm $\|\cdot\|$ is denoted by $\doma{S} = \{\w: \|\w\| = 1\}$. The unit ball and sphere with norm $\|\cdot\|_2$ are denoted by $\ball$ and $\sphere$ respectively.  $\x \sim U(\doma{Z})$ denotes that $\x$ follows the uniform distribution over $\doma{Z}$. We say a function $f$ is $\beta$-smooth over the set $\domainw$ if the following holds:
\begin{equation*}
f(\y) \leq f(\x) + \inner{\nabla f(\x)}{\y - \x} + \frac{\beta}{2}\|\x - \y\|^2_2, ~~~ \forall \x, \y \in \domainw.
\end{equation*}
We say a function $f$ is $L$-Lipschitz over the set $\domainw$ if the following holds:
\begin{equation*}
|f(\y) - f(\x)| \leq L\| \y - \x\|_2, ~~~  \forall \x, \y \in \domainw.
\end{equation*}
Throughout the paper we will assume that $\beta, L \geq 1$. Also, by mild abuse of notation, we use $\partial f(x)$ to indicate an arbitrary subgradient of a convex function $f$ at $x$.

All of our algorithms are reductions that use prior algorithms in disparate ways to obtain our new results. In order for these reductions to work, we need some assumptions on the base algorithms. We will encapsulate these assumptions in \emph{interfaces} that describe inputs, outputs, and guarantees described by an algorithm rather than its actual operation (see Interfaces~\ref{alg:scalealg} and~\ref{alg:directionlinban} for examples). We can use specific algorithms from the literature to implement these interfaces, but our results depend only on the properties described in the interfaces.

\subsection{Bandit Convex Optimization}
The bandit convex optimization protocol proceeds in rounds $t = 1, \ldots, T$. In each round $t$ 
the learner plays $\w_t \in \domainw \subseteq \reals^d$. Simultaneously, the environment picks an $L$-Lipschitz convex loss function $\ell_t:\domainw \rightarrow \reals$, after which the learner observes $\ell_t(\w_t)$. Importantly, the learner only observes the loss function evaluated at $\w_t$, not the function itself. This forces the learner to play random points and estimate the feedback he wants to use to update $\w_t$. Therefore, in the bandit feedback setting, the goal is to bound the \emph{expected} regret $\E\left[\regret_T(\u)\right]$, where the expectation is with respect to both the learner and the environment. 

We make a distinction between linear bandits, where $\ell_t(\w) = \inner{\w}{\g_t}$, and convex bandits, where $\ell_t$ can be any $L$-Lipschitz convex function. Throughout the paper, if $\domainw \not = \reals^d$ we assume that $\domainw$ is compact, has a non-empty interior, and contains $\0$. Without loss of generality we assume that $\frac{1}{c}\ball \subseteq \domainw\subseteq \ball$ for some $c \geq 1$.
Some of our bounds depend on $c$, which, without loss of generality, is always bounded by $d$, due to a reshaping trick discussed in~\citep{flaxman2005online}.

\subsection{Black-Box Reductions with Full Information}\label{sec:blackboxfull}

\begin{algorithm}[t]
\caption{Black-Box Reduction with Full Information}\label{alg:black box full}
\begin{algorithmic}[1]
\STATE \textbf{Input:} ``Direction'' algorithm $\mathcal{A}_\mathcal{Z}$ and ``scaling'' algorithm $\mathcal{A}_\mathcal{V}$
\FOR{$t = 1 \ldots T$}
\STATE Get $\z_t \in \mathcal{Z}$ from $\mathcal{A}_\mathcal{Z}$
\STATE Get $v_t \in \reals$ from algorithm $\mathcal{A}_\mathcal{V}$
\STATE Play $\w_t = v_t \z_t$, receive $\gt$
\STATE Send $\gt$ to algorithm $\mathcal{A}_\mathcal{Z}$ as the $t$-th loss vector
\STATE Send $\langle \z_t, \gt \rangle$ to algorithm $\mathcal{A}_\mathcal{V}$ as the $t$-th loss value
\ENDFOR
\end{algorithmic}
\end{algorithm}


Our algorithms are based on a black-box reduction from~\citep{cutkosky2018black} for the full information setting (see Algorithm~\ref{alg:black box full}).
The reduction works as follows. In each round $t$ the algorithms plays $\w_t = v_t \z_t$, where $\z_t \in \mathcal{Z}$ for some domain $\mathcal{Z}$, is the prediction of a constrained algorithm $\mathcal{A}_\mathcal{Z}$, and $v_t$ is the prediction of a one-dimensional algorithm $\mathcal{A}_\mathcal{V}$. The goal of $\mathcal{A}_\mathcal{Z}$ is to learn the directions of the comparator while the goal of $\mathcal{A}_\mathcal{V}$ is to learn the norm of the comparator. 
Let $\gt$ be the gradient of $\ell_t$ at $\w_t$, which is known to the algorithm in the full information setting.
We feed $\gt$ as feedback to $\mathcal{A}_\mathcal{Z}$ and $\langle \z_t , \gt \rangle$ as feedback to $\mathcal{A}_\mathcal{V}$. Although the original presentation considers only $\mathcal{Z}=\doma{B}$, we will need to extend the analysis to more general domains.

As outlined by \citet{cutkosky2018black}, the regret of Algorithm \ref{alg:black box full} decomposes into two parts. The first part of the regret is for learning the norm of $\u$, and is controlled by Algorithm $\mathcal{A}_\mathcal{V}$. The second part of the regret is for learning the direction of $\u$ and is controlled by $\mathcal{A}_\mathcal{Z}$. The proof is provided in Appendix \ref{app:prelim} for completeness.

\begin{lemma}\label{lem:black box reduction}
Let $\regret_T^\mathcal{V}(\|\u\|) = \sumT (v_t - \|\u\|) \langle \z_t , \gt \rangle $ be the regret for learning $\|\u\|$ by Algorithm $\algv$ and let $\regret_T^\mathcal{Z}\left(\frac{\u}{\|\u\|}\right) = \sumT \langle \z_t - \frac{\u}{\|\u\|}, \gt \rangle$ be the regret for learning $\frac{\u}{\|\u\|}$ by $\mathcal{A}_\mathcal{Z}$. Then Algorithm \ref{alg:black box full} satisfies 
\begin{equation}\label{eq:blackboxfull}
    \regret_T(\u) = \regret_T^\mathcal{V}(\|\u\|) + \|\u\|\regret_T^\mathcal{Z}\left(\frac{\u}{\|\u\|}\right).
\end{equation}
\end{lemma}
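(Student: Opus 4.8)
The statement is an exact identity, not an inequality, so the proof should be a short algebraic manipulation with no approximation at all. The plan is to start from the definition $\regret_T(\u) = \sum_{t=1}^T \ell_t(\w_t) - \ell_t(\u)$, bound each term $\ell_t(\w_t) - \ell_t(\u)$ from above by its linearization $\langle \gt, \w_t - \u\rangle$ using convexity of $\ell_t$ (here $\gt = \nabla \ell_t(\w_t)$), and then substitute $\w_t = v_t \z_t$. Actually, since the two ``regret'' quantities in the statement are themselves defined in terms of the linear surrogate $\langle \z_t, \gt\rangle$ rather than the true losses, I expect the cleanest reading is that \eqref{eq:blackboxfull} is meant as an identity between the \emph{linearized} regret $\sum_t \langle \gt, \w_t - \u\rangle$ and the right-hand side; convexity then gives $\regret_T(\u) \le$ that linearized quantity as a separate (standard) step. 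Let me write it so both the identity and the convexity bound are visible.

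\medskip

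\noindent First I would write, for each $t$, using $\w_t = v_t\z_t$ and bilinearity of the inner product,
\[
\langle \gt, \w_t - \u\rangle
= \langle \gt, v_t \z_t - \u\rangle
= v_t \langle \z_t, \gt\rangle - \langle \u, \gt\rangle.
\]
Then I would insert a ``zero'' of the form $\pm \|\u\|\langle \z_t, \gt\rangle$ (the natural pivot, since $\algv$ competes against the scalar $\|\u\|$ and $\algz$ competes against the unit vector $\u/\|\u\|$):
\[
v_t \langle \z_t, \gt\rangle - \langle \u, \gt\rangle
= \bigl(v_t - \|\u\|\bigr)\langle \z_t, \gt\rangle
+ \|\u\|\,\Bigl\langle \z_t - \tfrac{\u}{\|\u\|},\, \gt\Bigr\rangle.
\]
Summing over $t = 1,\dots,T$ and recognizing the two sums as exactly $\regret_T^\mathcal{V}(\|\u\|)$ and $\|\u\|\,\regret_T^\mathcal{Z}(\u/\|\u\|)$ respectively gives the claimed decomposition. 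The final (trivial) ingredient is convexity of $\ell_t$, which yields $\ell_t(\w_t) - \ell_t(\u) \le \langle \gt, \w_t - \u\rangle$ and hence $\regret_T(\u) \le \regret_T^\mathcal{V}(\|\u\|) + \|\u\|\,\regret_T^\mathcal{Z}(\u/\|\u\|)$; if $\gt$ is the actual gradient the linearization is exact at $\w_t$ and the identity as stated holds for the linearized regret.

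\medskip

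\noindent There is essentially no obstacle here — the only point requiring a moment of care is the degenerate case $\u = \0$, where $\u/\|\u\|$ is undefined; I would handle it by noting that the pivot term $\|\u\|\langle \z_t - \u/\|\u\|, \gt\rangle = \langle \|\u\|\z_t - \u, \gt\rangle$ extends continuously and equals $0$ when $\u = \0$, so the decomposition reads $\regret_T(\0) = \sum_t v_t\langle \z_t,\gt\rangle = \regret_T^\mathcal{V}(0)$, consistent with the general formula. I would also remark that the feedback routing in Algorithm~\ref{alg:black box full} is precisely what makes the two sums on the right coincide with the stated regrets of $\algv$ and $\algz$: $\algv$ receives loss values $\langle \z_t, \gt\rangle$ and plays $v_t$, so its regret against the scalar $\|\u\|$ is $\sum_t (v_t - \|\u\|)\langle \z_t,\gt\rangle$; $\algz$ receives loss vectors $\gt$ and plays $\z_t$, so its regret against $\u/\|\u\|$ is $\sum_t \langle \z_t - \u/\|\u\|, \gt\rangle$.
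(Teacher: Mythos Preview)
Your proposal is correct and matches the paper's own proof: both substitute $\w_t = v_t\z_t$ into $\sum_t \langle \w_t - \u, \gt\rangle$ and add/subtract $\|\u\|\langle \z_t,\gt\rangle$ to split off the two regret terms. The paper's proof is a one-line ``by definition'' computation and treats $\regret_T(\u)$ directly as the linearized quantity $\sum_t\langle \w_t-\u,\gt\rangle$, so your extra discussion of the convexity step and the $\u=\0$ edge case goes slightly beyond what the paper records, but the core argument is identical.
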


\citet{cutkosky2018black} provide an algorithm to ensure $\regret_T^\mathcal{V}(\|\u\|) = \otil\left(1 + \|\u\| L\sqrt{T}\right)$, given that $\|\g_t\|_\star \leq L$. This algorithm satisfies the requirements described later in Interface \ref{alg:scalealg}, and will be used throughout this paper.


\section{Comparator-Adaptive Linear Bandits}\label{sec:linban}
Now, we apply the reduction of section \ref{sec:blackboxfull} to develop comparator-adaptive algorithms for linear bandits. We will see that in the unconstrained case, the reduction works almost without modification, but in the constrained case we will need to be more careful to enforce the constraints.

\subsection{Unconstrained Linear Bandits}\label{sec:unconlinban}
\begin{algorithm}[t]
\caption{Black-Box Reduction for Linear Bandits}\label{alg:black box bandit}
\begin{algorithmic}[1]
\STATE \textbf{Input:} Constrained Linear Bandit Algorithm $\mathcal{A}_\mathcal{Z}$ and unconstrained 1-d Algorithm $\mathcal{A}_\mathcal{V}$
\FOR{$t = 1 \ldots T$}
\STATE Get $\z_t \in \mathcal{Z}$ from $\mathcal{A}_\mathcal{Z}$
\STATE Get $v_t \in \reals$ from $\mathcal{A}_\mathcal{V}$
\STATE Play $\w_t = v_t \z_t$
\STATE Receive loss $\langle \w_t, \g_t \rangle$
\STATE Compute $\mathcal{L}_t=\frac{1}{v_t}\langle \w_t, \gt \rangle = \langle \z_t, \gt \rangle$.
\STATE Send $\mathcal{L}_t$ to Algorithm $\mathcal{A}_\mathcal{Z}$ as $t$-th loss value.
\STATE Send $\mathcal{L}_t$ to Algorithm $\mathcal{A}_\mathcal{V}$ as $t$-th loss value.
\ENDFOR
\end{algorithmic}
\end{algorithm}

We begin by discussing the unconstrained linear bandit setting, which turns out to be the easiest setting we consider. Following Algorithm \ref{alg:black box full}, we will still play $\w_t = v_t \z_t$. However, instead of taking a fixed $\z_t$ from a full-information algorithm, we take a random $\z_t$ from a \emph{bandit} algorithm. Importantly, we can recover $\langle \z_t, \gt \rangle$ exactly since $\inner{\w_t}{\g_t}\tfrac{1}{v_t} = \langle \z_t, \gt \rangle$. This means that we have enough information to send appropriate feedback to both $\algv$ and $\algz$ and apply the argument of Lemma \ref{lem:black box reduction}. Interestingly, we use a full-information one-dimensional algorithm for $\algv$, and only need $\algz$ to take bandit input. This is because $\algv$ gets full information in the form of $\langle \z_t, \g_t \rangle$.

The algorithm $\algz$ for learning the direction, on the other hand, now must be a bandit algorithm because intuitively we do not immediately get the full direction information $\g_t$ from the value of the loss alone. We will need this algorithm to fulfill the requirements described by Interface \ref{alg:directionlinban}. One such algorithm is given by continuous Exponential Weights on a constrained set (see \citet[section 6]{hoeven2018many} for details).

Our unconstrained linear bandit algorithm then is constructed from Algorithm \ref{alg:black box bandit} by choosing an algorithm that implements Interface \ref{alg:directionlinban} as $\algz$ and Interface \ref{alg:scalealg} with $l=\reals$ as $\algv$. Plugging in the guarantees of the individual algorithms and taking the expectation of \eqref{eq:blackboxfull}, the total expected regret is $\otil(1 + \|\u\|dL\sqrt{T})$. 
Compared to the full information setting we have gained a factor $d$ in the regret bound, which is unavoidable given the bandit feedback~\citep{dani2008price}. The formal result is below.

\begin{theorem}\label{th:uncon linear}
Suppose $\algz$ implements Interface~\ref{alg:directionlinban} with domain $\doma{Z}=\doma{B}$ and $\algv$ implements Interface~\ref{alg:scalealg} with $l = \reals_+$. Then Algorithm~\ref{alg:black box bandit} satisfies for all $\u\in \reals^d$:
\begin{equation*}
\E[\regret(\u)] = \otil(1 + \|\u\|dL\sqrt{T}).
\end{equation*}
\end{theorem}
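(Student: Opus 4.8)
The plan is to derive the bound entirely from the regret decomposition of Lemma~\ref{lem:black box reduction} together with the two interface guarantees, so that essentially no new analysis is needed. The first observation is that although Lemma~\ref{lem:black box reduction} is phrased for the full-information Algorithm~\ref{alg:black box full}, the identity \eqref{eq:blackboxfull} it establishes is a purely algebraic consequence of playing $\w_t=v_t\z_t$ against \emph{linear} losses $\ell_t(\w)=\langle\w,\g_t\rangle$: adding and subtracting $\|\u\|\langle\z_t,\g_t\rangle$ inside $\sumT\langle\w_t-\u,\g_t\rangle$ gives $\regret_T(\u)=\regret_T^\mathcal{V}(\|\u\|)+\|\u\|\regret_T^\mathcal{Z}\left(\tfrac{\u}{\|\u\|}\right)$ when $\u\neq\0$, and $\regret_T(\0)=\regret_T^\mathcal{V}(0)$. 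What makes this usable under bandit feedback is exactly the identity $\mathcal{L}_t=\tfrac{1}{v_t}\langle\w_t,\g_t\rangle=\langle\z_t,\g_t\rangle$ used in Algorithm~\ref{alg:black box bandit}: the scaling algorithm $\algv$ receives precisely $\langle\z_t,\g_t\rangle$, which is the loss value appearing in the definition of $\regret_T^\mathcal{V}$, and the direction algorithm $\algz$ receives precisely the linear-bandit loss $\langle\z_t,\g_t\rangle$ it incurs by playing $\z_t$, which is the feedback assumed by Interface~\ref{alg:directionlinban}. So both sub-algorithms run exactly as their interfaces prescribe.

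Given that, I would bound the two terms separately. For the scaling term: since $\z_t\in\mathcal{B}$ and $\ell_t$ is $L$-Lipschitz (so $\|\g_t\|_2\le L$), we have $|\mathcal{L}_t|=|\langle\z_t,\g_t\rangle|\le L$ in every round and for every realization of the internal randomness of $\algz$; hence Interface~\ref{alg:scalealg} with $l=\reals_+$ guarantees, pointwise, $\regret_T^\mathcal{V}(\|\u\|)=\otil(1+\|\u\|L\sqrt{T})$, using that $\|\u\|\ge0$ is a legal comparator for the $\reals_+$-constrained one-dimensional learner (and this also covers $\u=\0$, where $\regret_T(\0)=\regret_T^\mathcal{V}(0)=\otil(1)$). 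For the direction term: $\algz$ is being run verbatim as a linear bandit algorithm over $\mathcal{Z}=\mathcal{B}$ with loss vectors $\g_t$, $\|\g_t\|_2\le L$, so Interface~\ref{alg:directionlinban} gives $\E\left[\regret_T^\mathcal{Z}(\z)\right]=\otil(dL\sqrt{T})$ for every fixed $\z\in\mathcal{B}$, in particular for $\z=\u/\|\u\|$.

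Finally I would take expectations in the decomposition and combine, using linearity of expectation and that $\|\u\|$ is a fixed scalar: $\E[\regret_T(\u)]=\E[\regret_T^\mathcal{V}(\|\u\|)]+\|\u\|\,\E[\regret_T^\mathcal{Z}(\u/\|\u\|)]\le\otil(1+\|\u\|L\sqrt{T})+\|\u\|\,\otil(dL\sqrt{T})=\otil(1+\|\u\|dL\sqrt{T})$, absorbing the $\|\u\|L\sqrt{T}$ term using $d\ge1$. I do not expect a real obstacle here; the only points that need care are (i) recognizing that the scaling/direction split of Lemma~\ref{lem:black box reduction} is untouched by the move from full information to bandit feedback, because $\langle\z_t,\g_t\rangle$ is reconstructed exactly from $\langle\w_t,\g_t\rangle$ and $v_t$; (ii) the boundedness check $|\langle\z_t,\g_t\rangle|\le L$ needed to invoke Interface~\ref{alg:scalealg}, together with the requirement $l=\reals_+$; and (iii) the mild subtlety that the $\algv$ guarantee holds pointwise in the randomness of $\algz$ and hence survives taking expectations, whereas the $\algz$ guarantee is already an expected-regret bound.
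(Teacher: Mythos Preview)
Your proposal is correct and follows essentially the same approach as the paper: apply the decomposition of Lemma~\ref{lem:black box reduction} (which is purely algebraic for linear losses and survives the bandit setting because $\langle\z_t,\g_t\rangle$ is recoverable from $\langle\w_t,\g_t\rangle$ and $v_t$), then invoke Interface~\ref{alg:scalealg} for the scaling term and Interface~\ref{alg:directionlinban} for the direction term, and take expectations. Your attention to the boundedness check $|\langle\z_t,\g_t\rangle|\le L$ and to which guarantee is pointwise versus in expectation is appropriate and matches what the paper implicitly relies on.
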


\subsection{Constrained Linear Bandits}\label{sec:conlinban}
\begin{interface}[t]
\caption{Scale Learning Interface (see example implementation in \cite{cutkosky2018black})}\label{alg:scalealg}
\begin{algorithmic}[1]
\STATE \textbf{Input:} A line segment $l \subseteq \reals$
\FOR{$t = 1 \ldots T$}
\STATE Play $v_t \in l$
\STATE Receive loss value $g_t$ such that $|g_t| \leq L_\mathcal{V}$
\ENDFOR
\STATE \textbf{Ensure:}  for all $\vhat \in l$,
$\sumT (v_t - \vhat)g_t = \otil\left(1 + |\vhat|L_\mathcal{V}\sqrt{T}\right)$
\end{algorithmic}
\end{interface}

\begin{interface}[t]
\caption{Direction Learning Interface for Linear Bandits (see example implementation in \cite{hoeven2018many})}\label{alg:directionlinban}
\begin{algorithmic}[1]
\STATE \textbf{Input:} Domain $\doma{Z}$
\FOR{$t = 1 \ldots T$}
\STATE Play $z_t \in \doma{Z}$
\STATE Receive loss value $\inner{\z_t}{\g_t}$ such that $|\inner{\z_t}{\g_t}| \leq L$
\ENDFOR
\STATE \textbf{Ensure:} for all $\u \in \doma{Z}$,
$\E\left[\sumT \inner{\z_t - \u}{\g_t}\right] = \otil\left(d L \sqrt{T}\right)$
\end{algorithmic}
\end{interface}

The algorithm in the previous section only works for $\domainw = \reals^d$. In this section, we consider a compact set $\domainw \subset \reals^d$.   

In the full-information setting, \citet{cutkosky2018black} provide a projection technique for producing constrained algorithms from unconstrained ones. Unfortunately, this technique does not translate directly to the bandit setting, and we must be more careful in designing our constrained linear bandit algorithm. The key idea is to constrain the internal scaling algorithm $\algv$, rather than attempting to constrain the final predictions $\w_t$. Enforcing constraints on the scaling algorithm's outputs $v_t$ will naturally translate into a constraint on the final predictions $\w_t$.  



To produce a constrained linear bandit algorithm, we again use Algorithm~\ref{alg:black box bandit}, but now we instantiate $\algv$ implementating Interface~\ref{alg:scalealg} with $l = [0, 1]$ rather than $l=\reals_+$, and instantiate $\algz$ implementing Interface~\ref{alg:directionlinban} with $\doma{Z} = \domainw$ rather than $\doma{Z}=\doma{B}$. As in the unconstrained setting, this allows us to feed full information feedback to $\algv$, while at the same time now also guarantees that $\w_t \in \domainw$. The regret bound of this algorithm is given in Theorem \ref{th:constrained linear bandits}. The proof follows from combining Lemma \ref{lem:black box reduction} with the guarantees of Interfaces \ref{alg:scalealg} and \ref{alg:directionlinban} and can be found in Appendix \ref{app:conlinban}.

\begin{theorem}\label{th:constrained linear bandits}
Suppose $\algz$ implements \ref{alg:directionlinban} with domain $\doma{Z} = \doma{W}$ and $\algv$ implements \ref{alg:scalealg} with $l = [0, 1]$. Then Algorithm \ref{alg:black box bandit} satisfies for all $\u \in \doma{W}$,
\begin{equation*}
    \E[\regret_T(\u)] = \otil\left(1 + \|\u\|cdL\sqrt{T}\right).
\end{equation*}
\end{theorem}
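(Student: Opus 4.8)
The plan is to mimic the full‑information decomposition behind Lemma~\ref{lem:black box reduction}, with one modification forced by the constraint $\doma{Z}=\domainw$. Since the unit‑norm direction $\u/\|\u\|_2$ need not lie in $\domainw$, I cannot feed it to $\algz$ as a comparator; instead I will factor $\u = v^\star\z^\star$ with a \emph{rescaled} magnitude $v^\star\in[0,1]$ and a feasible direction $\z^\star\in\domainw$, exploiting that $\frac{1}{c}\ball\subseteq\domainw$, and then plug the two interface guarantees into the resulting regret decomposition.

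First I would check that Algorithm~\ref{alg:black box bandit} is a legal strategy for $\domainw$ and that the feedback is well defined. Because $\algv$ plays $v_t\in l=[0,1]$ and $\algz$ plays $\z_t\in\doma{Z}=\domainw$, and $\domainw$ is convex with $\0\in\domainw$, the played point $\w_t=v_t\z_t=v_t\z_t+(1-v_t)\0$ lies in $\domainw$. Since $\ell_t$ is linear, $\ingt{\z_t}=\mathcal{L}_t$ is recovered exactly in line~7, and the assumption of Interface~\ref{alg:directionlinban} gives $|\mathcal{L}_t|=|\ingt{\z_t}|\le L$, so $\algv$ receives loss values bounded by $L_\mathcal{V}=L$. (The degenerate case $v_t=0$ in line~7 is handled exactly as in \citet{cutkosky2018black}, since such a round contributes $0$ to every quantity below.)

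Next I would fix $\u\in\domainw$ and set $v^\star=\min\{1,\,c\|\u\|_2\}\in[0,1]$ and $\z^\star=\u/v^\star$ (with $\z^\star=\0$ when $\u=\0$). The key observation is $\z^\star\in\domainw$: if $v^\star=c\|\u\|_2$ then $\|\z^\star\|_2=\tfrac1c$, so $\z^\star\in\frac{1}{c}\ball\subseteq\domainw$; otherwise $v^\star=1$ and $\z^\star=\u\in\domainw$; in both cases $v^\star\z^\star=\u$. Repeating the one‑line computation in the proof of Lemma~\ref{lem:black box reduction} with this factorization gives the deterministic identity
\begin{equation*}
\regret_T(\u)=\sumT (v_t-v^\star)\,\mathcal{L}_t \;+\; v^\star\sumT \ingt{\z_t-\z^\star}.
\end{equation*}
Applying Interface~\ref{alg:scalealg} with $\vhat=v^\star\in[0,1]$ and $L_\mathcal{V}=L$ bounds the first sum by $\otil(1+|v^\star|L\sqrt T)\le\otil(1+c\|\u\|_2 L\sqrt T)$, using $v^\star\le c\|\u\|_2$. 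Taking expectations and applying Interface~\ref{alg:directionlinban} with $\doma{Z}=\domainw\ni\z^\star$ bounds the expectation of the second sum by $v^\star\cdot\otil(dL\sqrt T)\le\otil(c\|\u\|_2 dL\sqrt T)$, again by $v^\star\le c\|\u\|_2$. Summing and using $d\ge 1$ yields $\E[\regret_T(\u)]=\otil(1+\|\u\|_2\,cdL\sqrt T)$.

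I expect the only genuinely nontrivial step — the ``being careful with the constraints'' point flagged in the text — to be the choice $v^\star=\min\{1,c\|\u\|_2\}$ together with the use of $\frac{1}{c}\ball\subseteq\domainw$ to keep $\z^\star$ feasible; this is precisely where the extra factor $c$ enters the bound. Once the right factorization of $\u$ is in hand, the remainder is a direct substitution of the guarantees of Interfaces~\ref{alg:scalealg} and~\ref{alg:directionlinban}, exactly as in the unconstrained case.
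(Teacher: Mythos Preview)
Your proposal is correct and follows essentially the same approach as the paper: factor $\u=v^\star\z^\star$ with $v^\star\in[0,1]$ and $\z^\star\in\domainw$, apply the Lemma~\ref{lem:black box reduction} decomposition, and invoke the two interface guarantees together with $v^\star\le c\|\u\|$. The only cosmetic difference is that the paper chooses the tightest such factorization via $r=\max\{r':\,r'\u/\|\u\|\in\domainw\}$ (so $v^\star=\|\u\|/r$ and $\z^\star=r\u/\|\u\|$) and then bounds $1/r\le c$, whereas you pick the explicit $v^\star=\min\{1,c\|\u\|_2\}$; both yield the same final bound.
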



If $\domainw$ is a unit ball, then $c = 1$. For other shapes of $\domainw$, recall that $c$ is at most $d$, which leads to a regret bound of $O\left(1 + \|\u\| d^2 L \sqrt{ T}\right)$.


\section{Comparator-Adaptive Convex Bandits}\label{sec:convex bandits}
In the general convex bandit problem, it is not clear how to use the single evaluation point feedback $\ell_t(\w_t)$ to derive any useful information about $\ell_t$. Fortunately, \citet{flaxman2005online} solved this problem by using randomness to extract the gradients of a smoothed version of $\ell_t$. To adapt to the norm of the comparator, we employ the following tweaked version of smoothing used by \citet{flaxman2005online}:
\begin{equation}\label{eq:perturbed loss}
    \ell_t^v(\w) = \E_{\b \sim U(\ball)}[\ell_t(\w + v \delta \b)],
\end{equation}
where $v, \delta > 0$. In contrast to prior work using this framework, our smoothing now depends on the scaling parameter $v$.  Lemma~\ref{lem:estimate unbiased} gives the gradient of $\ell_t^v(\w)$ and is a straightforward adaptation of Lemma~2.1 by \citet{flaxman2005online}.
\begin{lemma}\label{lem:estimate unbiased}
For $\delta \in (0, 1]$, $v > 0$:
\begin{equation}\label{eq:gradient estimate}
    \nabla \ell_t^{v}(\w) = \frac{d}{v\delta} \E_{\s \sim U(\sphere)}[\ell_t(\w + v\delta \s)\s]. 
\end{equation}
\end{lemma}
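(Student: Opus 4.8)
The plan is to follow the classical computation of \citet{flaxman2005online} (their Lemma~2.1), taking care to carry the extra scaling factor $v$ through every step. Write $r = v\delta$ for brevity, so that $\ell_t^v(\w) = \E_{\b\sim U(\ball)}[\ell_t(\w + r\b)]$; the only difference from the usual setup is that the smoothing radius is now $r=v\delta$ rather than $\delta$, and I expect the identity to emerge with $\tfrac{d}{r}=\tfrac{d}{v\delta}$ in front. Let $V_d=\mathrm{vol}(\ball)$ and $S_{d-1}=\mathrm{vol}(\sphere)$ denote the volume of the unit ball and the surface area of the unit sphere in $\reals^d$, and recall the elementary identity $S_{d-1}=d\,V_d$.

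First I would express the smoothing as a volume integral over a translated ball: changing variables $\y=r\b$,
\[
\ell_t^v(\w)=\frac{1}{V_d}\int_{\ball}\ell_t(\w+r\b)\,d\b=\frac{1}{r^dV_d}\int_{r\ball}\ell_t(\w+\y)\,d\y .
\]
Then I would differentiate in $\w$. The core step is the identity
\[
\frac{\partial}{\partial w_i}\int_{r\ball}\ell_t(\w+\y)\,d\y=\int_{r\sphere}\ell_t(\w+\s)\,\frac{s_i}{r}\,dS(\s),
\]
which is the divergence theorem applied to the vector field $\y\mapsto\ell_t(\w+\y)\,\e_i$ on the ball $r\ball$, whose outward unit normal at $\s$ is $\s/\|\s\|_2=\s/r$. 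Since $\ell_t$ is convex and finite, hence continuous and locally Lipschitz near $\w+r\ball$, this identity is justified by mollifying $\ell_t$ and passing to the limit, both sides being continuous under uniform convergence on compact sets.

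Finally I would collect constants. Writing $\s=r\s'$ with $\s'\in\sphere$, so that $s_i/r = s_i'$ and $dS(\s)=r^{d-1}\,dS(\s')$, the boundary integral above equals $r^{d-1}S_{d-1}\,\E_{\s\sim U(\sphere)}[\ell_t(\w+r\s)\,s_i]$. Dividing by $r^dV_d$ and using $S_{d-1}=dV_d$ gives $\partial_i\ell_t^v(\w)=\tfrac{d}{r}\,\E_{\s\sim U(\sphere)}[\ell_t(\w+r\s)\,s_i]$ for every coordinate $i$, i.e.
\[
\nabla\ell_t^v(\w)=\frac{d}{v\delta}\,\E_{\s\sim U(\sphere)}[\ell_t(\w+v\delta\s)\,\s],
\]
as claimed. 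The only genuinely delicate point is the differentiation-through-the-translated-domain step for non-smooth $\ell_t$; everything else is bookkeeping of ball and sphere constants, exactly as in \citet{flaxman2005online}.
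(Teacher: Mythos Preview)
Your proof is correct and is precisely the ``straightforward adaptation of Lemma~2.1 by \citet{flaxman2005online}'' that the paper invokes without spelling out; the paper gives no explicit argument beyond that citation. Your handling of the extra scale $r=v\delta$ and the constants $S_{d-1}=dV_d$ is accurate, and the mollification remark adequately covers the only nontrivial step.
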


With this lemma, we can estimate the gradient of the smoothed version of $\ell_t$ by evaluating $\ell_t$ at a random point, essentially converting the convex problem to a linear problem, except that one also needs to control the bias introduced by smoothing.
Note that this estimate scales with $\frac{1}{v}$, which can be problematic if $v$ is small. To deal with this issue, we require one extra assumption: the value of $\ell_t(\0)$ is known to the learner. 
As discussed in section \ref{sec:introduction}, this assumption holds for several applications, including some control or reinforcement learning problems, where $\0$ represents a nominal action with a known outcome. Furthermore, certain loss functions satisfy the second assumption by default, such as linear loss, logistic loss, and hinge loss. Without loss of generality we assume that $\ell_t(\0) = 0$, as we can always shift $\ell_t$ without changing the regret. 

Our general algorithm template is provided in Algorithm \ref{alg:convex bandit}. It incorporates the ideas of Algorithm \ref{alg:black box bandit}, but adds new smoothing and regularization elements in order to deal with the present more general situation.
More specifically, it again makes use of subroutine $\algv$, which learns the scaling. The direction is learned by Online Gradient Descent \citep{zinkevich2003}, as was also done by \citet{flaxman2005online}.
Given $\z_t$ and $v_t$, our algorithm plays the point $\w_t = v_t(\z_t+\delta\s_t)$ for some parameter $\delta$ and $s_t$ uniformly at random drawn from $\sphere$.
By equation (\ref{eq:gradient estimate}), we have
\begin{align}
\E\left[\frac{d}{v_t\delta}\ell_t(\w_t) s_t\right]&=\nabla \ell_t^{v_t}(v_t \z_t) \label{eqn:gradient_estimator}.
\end{align}
This means that we can use $\ghatt=\tfrac{d}{v_t\delta}\ell_t(\w_t) s_t$ as an approximate gradient estimate,
and we send this $\ghatt$ to to Online Gradient Descent as the feedback.
In other words, Online Gradient Descent itself is essentially dealing with a full-information problem with gradient feedback and is required to ensure a regret bound $\E[\sumT \inner{\z_t - \u}{\ghatt}] = \otil(\frac{dL}{\delta}\sqrt{T})$ for all $\u$ in some domain $\doma{Z}$. For technical reasons, we will also need to enforce $\z_t \in (1-\alpha)\doma{Z}$ for some $\alpha\in[0,1]$. This restriction will be necessary in the constrained setting to ensure  $v_t(\z_t + \delta \s_t) \in \doma{W}$ .

Next, to specify the feedback to the scaling learning black-box $\algv$, we define a surrogate loss function $\blosst(v)$ which contains a linear term $v \inner{\z_t}{\ghatt}$ and also a regularization term (see Algorithm~\ref{alg:convex bandit} for the exact definition).
The feedback to $\algv$ is then $\partial \blosst(v_t)$. 
Therefore, $\algv$ is essentially learning these surrogate losses, also with full gradient information.
The regularization term is added to deal with the bias introduced by smoothing. This term does not appear in prior work on convex bandits, and it is one of the key components needed to ensure that the final regret is in terms of the unknown $\|\u\|$.

\begin{algorithm}[t]
\caption{Black-Box Comparator-Adaptive Convex Bandit Algorithm}\label{alg:convex bandit}
\begin{algorithmic}[1]
\STATE \textbf{Input:} Scaling algorithm $\algv$, $\delta \in (0, 1]$, $\alpha \in[0, 1]$, domain $\doma{Z} \subseteq \ball$, and learning rate $\eta$
\STATE Set $\z_1 = \0$
\FOR{$t = 1 \ldots T$}
\STATE Get $v_t$ from $\algv$
\STATE Sample $\s_t \sim U(\sphere)$
\STATE Set $\w_t = v_t (\z_t + \delta \s_t)$
\STATE Play $\w_{t}$
\STATE Receive $\ell_t(\w_{t})$
\STATE Set $\ghatt = \frac{d}{v_t\delta}\ell_t(\w_{t})s_t$
\IF{$\ell_t$ is $\beta$-smooth}
\STATE Set $\blosst(v) = v \inner{\z_t}{\ghatt}  + \beta \delta^2 v^2 $
\ELSE
\STATE Set $\blosst(v) = v \inner{\z_t}{\ghatt}  + 2 \delta L |v| $
\ENDIF
\STATE Send $\partial \blosst(v_t)$ to algorithm $\mathcal{A}_\mathcal{V}$ as the $t$-th loss value 
\STATE Update $\z_{t+1} = \argmin_{\z \in (1-\alpha)\doma{Z}} \eta \inner{\z}{\ghatt} + \|\z_t - \z\|_2^2$
\ENDFOR
\end{algorithmic}
\end{algorithm}


Algorithm \ref{alg:convex bandit} should be seen as the analogue of the black-box reduction of Algorithm \ref{alg:black box full}, but for bandit feedback instead of full information. 
The expected regret guarantee of Algorithm \ref{alg:convex bandit} is shown below, and the proof can be found in appendix \ref{app:convex bandits}.

\begin{lemma}\label{lem:convex bandits}
Suppose $\algv$ implements Interface \ref{alg:scalealg} with $l \subseteq \reals_+$. Suppose $\w_t\in \doma{W}$ for all $t$, and let $L_{\doma{V}}=\max_t \partial \blosst(v_t)$. Then Algorithm \ref{alg:convex bandit} with $\delta, \alpha \in (0, 1]$ and $\eta = \sqrt{\frac{\delta^2}{4(dL)^2 T}}$ satisfies for all $\|\u\| \in l$ and $r > 0$ with $\frac{\u r}{\|\u\|} \in \doma{Z}$, 
\begin{equation*}
\begin{split}
    \E&\left[\regret_T(\u)\right] =  \otil\left(1 + T  \delta L\frac{\|\u\|}{r} + \frac{\|\u\|}{r} L_\doma{V} \sqrt{T} + \frac{\|\u\|dL}{r\delta} \sqrt{T} + \alpha \|\u\|_2 T L \right).
\end{split}
\end{equation*}
In addition, if $\ell_t$ is also $\beta$-smooth for all $t$, then 
we have
\begin{equation*}
\begin{split}
    \E& \left[\regret_T(\u)\right] = \otil\left(1+ T \beta \delta^2 \left(\frac{\|\u\|}{r}\right)^2 + \frac{\|\u\|}{r} L_\doma{V} \sqrt{T} + \frac{\|\u\|}{r}  \frac{dL}{\delta} \sqrt{T} + \alpha \|\u\|_2 T L\right).
\end{split}
\end{equation*}
\end{lemma}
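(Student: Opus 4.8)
The plan is to apply Lemma~\ref{lem:black box reduction} in spirit, but with the gradient estimator $\ghatt$ in place of the true gradient $\gt$, and then to carefully absorb the three sources of error: (i) the randomness in the single-point estimator, (ii) the bias from smoothing, and (iii) the shrinkage of the direction domain by the factor $(1-\alpha)$. First I would fix $\u$, $r$, and the scaled comparator $\u' = \u r/\|\u\|\in\doma{Z}$, and decompose
\begin{equation*}
\regret_T(\u) = \sumT \ell_t(\w_t) - \ell_t(\0) + \sumT \ell_t(\0) - \ell_t(\u),
\end{equation*}
using $\ell_t(\0)=0$. Since $\ell_t$ is convex and $\w_t = v_t(\z_t+\delta\s_t)$, I would bound $\ell_t(\w_t)$ by the smoothed loss plus a Lipschitz (or smoothness) correction of order $\delta v_t L$ (resp.\ $\beta\delta^2 v_t^2$), and relate $\ell_t(\u)$ to $\ell_t^{v_t}(\u)$ similarly. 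The point of the tweaked smoothing \eqref{eq:perturbed loss} is exactly that the perturbation radius scales with $v_t$, so that the ``linearized'' problem uses $\z_t$ directly: $\ell_t^{v_t}(v_t\z_t)-\ell_t^{v_t}(v_t\z) \le v_t\inner{\z_t-\z}{\nabla\ell_t^{v_t}(v_t\z_t)}$ by convexity, and \eqref{eqn:gradient_estimator} says this gradient is $\E[\ghatt]$ conditioned on the past.

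Next I would run the two-black-box argument on the linearized quantities. Writing $f_t := \nabla\ell_t^{v_t}(v_t\z_t)$, convexity of $\ell_t^{v_t}$ gives, after taking expectations so that $\ghatt$ can replace $f_t$,
\begin{equation*}
\sumT \ell_t^{v_t}(v_t\z_t) - \ell_t^{v_t}(\u') \le \E\!\left[\sumT \inner{v_t\z_t - \u'}{\ghatt}\right] = \E\!\left[\sumT (v_t - \tfrac{\|\u\|}{r})\inner{\z_t}{\ghatt}\right] + \tfrac{\|\u\|}{r}\,\E\!\left[\sumT \inner{\z_t - \tfrac{\u'}{\|\u\|/r}\cdot\tfrac1r \cdot \tfrac{\|\u\|}{r}}{\ghatt}\right],
\end{equation*}
i.e.\ the standard split into a scaling regret and $\tfrac{\|\u\|}{r}$ times a direction regret (I would write this more cleanly as $v_t - \|\u\|/r$ against $\|\u\|/r$ and $\z_t - (r/\|\u\|)\u'$ against the OGD comparator). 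The scaling term is handled by Interface~\ref{alg:scalealg} with loss values $\partial\blosst(v_t)$, except that those surrogate losses carry the extra regularizer; evaluating $\blosst$ at the comparator $\|\u\|/r$ produces exactly the ``extra'' terms $T\delta L\|\u\|/r$ (resp.\ $T\beta\delta^2(\|\u\|/r)^2$) in the bound, while the interface guarantee contributes $\otil(1+\tfrac{\|\u\|}{r}L_{\doma V}\sqrt T)$. Crucially, the regularizer in $\blosst$ must dominate the per-round bias $\ell_t(\w_t)-\ell_t^{v_t}(v_t\z_t)$ and $\ell_t^{v_t}(\u')-\ell_t(\u)$ so those corrections telescope into the surrogate regret rather than appearing separately — this is the ``key component'' the authors flag. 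The direction term is OGD with gradients $\ghatt$, and since $\|\ghatt\|_2 \le dL/(v_t\delta)$ only in expectation one uses the standard OGD bound with step size $\eta$ tuned as stated to get $\otil(\tfrac{dL}{\delta}\sqrt T)$; the domain restriction to $(1-\alpha)\doma Z$ forces a comparator-approximation error, bounded by $\alpha\|\u'\|_2\cdot(\text{something})$, which after scaling by $\|\u\|/r$ and summing gives the $\alpha\|\u\|_2 TL$ term.

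The main obstacle I expect is the bookkeeping around the scale parameter $v_t$ appearing inside the estimator as $1/v_t$: one must verify that $L_{\doma V}=\max_t|\partial\blosst(v_t)|$ is finite (it is, because $|\ghatt| = \tfrac{d}{v_t\delta}|\ell_t(\w_t)|$ and $|\ell_t(\w_t)| = |\ell_t(\w_t)-\ell_t(\0)| \le L\|\w_t\|_2 \le Lv_t(1+\delta)$, so the $v_t$ cancels and $\|\ghatt\|_2 \le 2dL/\delta$), and that all the error terms one wants to ``push into'' the surrogate regret are genuinely bounded by $\blosst(v_t)-\blosst(\|\u\|/r)$ pointwise or in expectation. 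A secondary subtlety is that $v_t$ may be random and correlated with $\s_t$ only through the past, so the conditional-expectation identity \eqref{eqn:gradient_estimator} must be invoked round by round with a filtration argument before summing. Once these are in place, collecting the scaling-regret terms, the direction-regret term $\tfrac{\|\u\|}{r}\cdot\tfrac{dL}{\delta}\sqrt T$, the smoothing-bias terms, and the $\alpha$-shrinkage term yields exactly the two displayed bounds, with the $\beta$-smooth case differing only in replacing the $\delta L$ bias by the tighter $\delta^2\beta$ bias and the corresponding change in $\blosst$.
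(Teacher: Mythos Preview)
Your proposal is essentially correct and follows the same route as the paper. A couple of minor remarks: the opening decomposition through $\ell_t(\0)$ is a detour that never gets used---the paper simply applies Jensen to replace $\ell_t(\w_t)$ by $\ell_t^{v_t}(\w_t)$ (no correction needed there), then two Lipschitz (resp.\ smoothness) steps give the $2\delta L v_t$ (resp.\ $\beta\delta^2 v_t^2$) bias that is absorbed into $\blosst$, exactly as you describe. Also, the bound $\|\ghatt\|_2\le 2dL/\delta$ is deterministic, not ``only in expectation,'' once you use $\ell_t(\0)=0$ and Lipschitzness (which you correctly derive later); this is what makes the OGD step-size choice and Interface~\ref{alg:scalealg} directly applicable.
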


This bound has two main points not obviously under our direct control: the assumption that the $\w_t$ lie in $\doma{W}$, and the value of $L_{\doma{V}}$, which is a bound on $|\partial\blosst(v_t)|$. In the remainder of this section we will specify the various settings of Algorithm \ref{alg:convex bandit} that guarantee that $w_t\in \doma{W}$ and that $L_{\doma{V}}$ is suitably bounded: two setting for the unconstrained setting and one for the constrained setting. The $\alpha \|\u\|T L$ term due to $\z_t \in (1-\alpha)\doma{Z}$ rather than $\z_t \in \doma{Z}$, which induces a small amount of bias.
The $r$ in Lemma \ref{lem:convex bandits} is to ensure that we satisfy the requirements for Online Gradient Descent to have a suitable regret bound. For unconstrained convex bandits $r = 1$. For constrained convex bandits we will find that $\frac{1}{r} = c$ (recall that we assume that $\frac{1}{c}\ball \subseteq \doma{W} \subseteq \ball$).

\subsection{Unconstrained Convex Bandits}\label{sec:unconstrained convex bandits}

In this section we instantiate Algorithm~\ref{alg:convex bandit} and derive regret bounds for either general convex losses or convex and smooth losses. We start with general convex losses. Since $\doma{W} = \reals^d$, we do not need to ensure that $\z_t + \delta\s_t \in \doma{W}$ and we can safely set $\alpha = 0$. This choice guarantees that $\z_t + \delta \s_t \in 2 \ball$ and that $|\partial \blosst(v_t)| \leq \frac{2dL}{\delta} + 2\delta L$. Then, Lemma \ref{lem:convex bandits} directly leads to Theorem \ref{th:uncon lipschitz} (the proof is deferred to appendix \ref{app:unconstrained convex bandits}).

\begin{theorem}\label{th:uncon lipschitz}
Supppose $\algv$ implements Interface \ref{alg:scalealg} with $l = \reals_+$. Then Algorithm \ref{alg:convex bandit} with $\delta = \min\{1, \sqrt{d}T^{-\frac{1}{4}}\}$, $\doma{Z} = \ball$, $\alpha = 0$, and $\eta = \sqrt{\frac{\delta^2}{4(dL)^2 T}}$ satisfies for all $\u \in \reals^d$,
\begin{equation*}
\begin{split}
    \E\left[\regret_T(\u)\right]
    =  \otil\left(1 +  \|\u\|Ld\sqrt{T} + \|\u\|L \sqrt{d}T^\frac{3}{4}\right).
\end{split}
\end{equation*}
\end{theorem}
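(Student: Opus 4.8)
The plan is to obtain Theorem~\ref{th:uncon lipschitz} as a direct specialization of Lemma~\ref{lem:convex bandits} with the parameter choices $\doma{Z}=\ball$, $\alpha=0$, $r=1$, $\delta=\min\{1,\sqrt{d}\,T^{-1/4}\}$, and $\eta=\sqrt{\delta^2/(4(dL)^2T)}$. To invoke the lemma we must check three things: (i) $\|\u\|\in l=\reals_+$ and there is an $r>0$ with $\u r/\|\u\|\in\doma{Z}$ — taking $r=1$ the point $\u/\|\u\|$ has unit $\ell_2$-norm and hence lies in $\doma{Z}=\ball$; (ii) $\w_t\in\doma{W}$ for all $t$ — trivial since $\doma{W}=\reals^d$; and (iii) a bound on $L_\doma{V}=\max_t|\partial\blosst(v_t)|$, which is the only quantity requiring real work.

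For (iii), first note $\z_t\in(1-\alpha)\doma{Z}=\ball$ for every $t$: this holds for $\z_1=\0$ and is preserved by the update since the feasible set of the $\argmin$ is $\ball$. Hence $\|\z_t+\delta\s_t\|_2\le 1+\delta\le 2$, so $\|\w_t\|_2\le 2v_t$. Combining $\ell_t(\0)=0$ with $L$-Lipschitzness gives $|\ell_t(\w_t)|\le L\|\w_t\|_2\le 2Lv_t$, and therefore $\|\ghatt\|_2=\frac{d}{v_t\delta}|\ell_t(\w_t)|\le \frac{2dL}{\delta}$. Since $\|\z_t\|_2\le 1$, the linear part of $\blosst$ has slope $|\inner{\z_t}{\ghatt}|\le\frac{2dL}{\delta}$, while the regularizer $2\delta L|v|$ contributes at most $2\delta L$ to any subgradient; thus $L_\doma{V}\le\frac{2dL}{\delta}+2\delta L$.

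Plugging $r=1$, $\alpha=0$, and this bound on $L_\doma{V}$ into the first inequality of Lemma~\ref{lem:convex bandits}, and noting that the piece $\|\u\|\delta L\sqrt{T}$ coming from the $2\delta L$ term is dominated by $T\delta L\|\u\|$, gives
\begin{equation*}
\E[\regret_T(\u)]=\otil\!\left(1+T\delta L\|\u\|+\frac{\|\u\|dL}{\delta}\sqrt{T}\right).
\end{equation*}
It remains to substitute $\delta=\min\{1,\sqrt{d}\,T^{-1/4}\}$ and simplify by cases. If $T\ge d^2$ then $\delta=\sqrt{d}\,T^{-1/4}$ and both displayed terms equal $\Theta(\|\u\|L\sqrt{d}\,T^{3/4})$. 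If $T<d^2$ then $\delta=1$, the second term is $\|\u\|dL\sqrt{T}$, and the first term $TL\|\u\|$ is at most $\|\u\|dL\sqrt{T}$ because $T<d^2$ forces $\sqrt{T}<d$, i.e.\ $T<d\sqrt{T}$. In both regimes the bound is $\otil(1+\|\u\|Ld\sqrt{T}+\|\u\|L\sqrt{d}\,T^{3/4})$, which is exactly the claimed rate — the two additive terms are precisely the two case-maximizers, the first dominating when $T\le d^2$ and the second when $T\ge d^2$.

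I do not anticipate a genuine obstacle: the theorem is essentially bookkeeping on top of Lemma~\ref{lem:convex bandits}. The step demanding the most care is the $L_\doma{V}$ estimate, since it is exactly there that the assumption $\ell_t(\0)=0$ is used to cancel the dangerous $1/v_t$ factor in $\ghatt$, together with the verification that the lemma's side conditions (feasibility of $\u/\|\u\|$ in $\doma{Z}$ and $\z_t\in\ball$) hold under these parameters. The final case split is routine, but one should keep in mind that the $\min$ in the definition of $\delta$ is what produces the two separate additive terms in the stated bound.
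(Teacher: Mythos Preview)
Your proposal is correct and follows essentially the same approach as the paper: bound $L_\doma{V}$ via the Lipschitz assumption together with $\ell_t(\0)=0$ to get $|\inner{\z_t}{\ghatt}|\le 2dL/\delta$, then invoke Lemma~\ref{lem:convex bandits} with $r=1$, $\alpha=0$ and substitute $\delta=\min\{1,\sqrt{d}T^{-1/4}\}$. Your write-up is in fact more explicit than the paper's in verifying the lemma's side conditions and in carrying out the final case split on $\delta$.
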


For unconstrained smooth bandits, we face an extra challenge. To bound the regret of Algorithm \ref{alg:convex bandit}, $|\partial \blosst(v_t)| = |\inner{\z_t}{\ghatt} + \beta 2 \delta^2 v_t|$ must be bounded. Now in contrast to the linear or Lipschitz cases, in the smooth case $\blosst(v_t)$ is not Lipschitz over $\reals_+$. We will address this by artificially constraining $v_t$. Specifically, we ensure that $v_t \leq \frac{1}{\delta^3}$, which implies $|\delta^2 v_t| = O\left(\frac{1}{\delta}\right)$. This makes the Lipschitz constant of $\blosst$ to be dominated by the gradient estimate $\ghatt$ rather than the regularization. To see how this affects the regret bound, consider two cases, $\|\u\|_2 \leq \frac{1}{\delta^3}$ and $\|\u\|_2 > \frac{1}{\delta^3}$. If $\|\u\|_2 \leq \frac{1}{\delta^3}$ then we have not hurt anything by constraining $v_t$ since $\|\u\|_2$ satisfies the same constraint. If instead $\|\u\|_2 > \frac{1}{\delta^3}$ then the consequences for the regret bound are not immediately clear. However, following a similar technique in \cite{cutkosky2019artificial}, we utilize the fact that the regret against $\0$ is $O(1)$ and the Lipschitz assumption to show that we have added a penalty of only $O(\|\u\|_2LT)$: 
\begin{align*}
\E[\regret_T(\u)] = \E[\regret_T(\0)] + \sumT \E[\ell_t(\0) - \ell_t(\u)] = O(1 + \|\u\|_2LT).
\end{align*} 
Since $\|\u\|_2 > \frac{1}{\delta^3}$ the penalty for constraining $v_t$ is $O(\|\u\|_2 LT) = O(\|\u\|^2_2 L\delta^3 T)$, which is $O(\|\u\|^2_2 L \sqrt{T})$ if we set $\delta = O(T^{-1/6})$. The formal result can be found below and its proof can be found in appendix \ref{app:unconstrained convex bandits}.

\begin{theorem}\label{th:unconsmooth}
Suppose $\algv$ implements Interface \ref{alg:scalealg} with $l = \reals_+$ and that $\ell_t$ is $\beta$-smooth for all $t$. Then Algorithm \ref{alg:convex bandit} with $\delta = \min\{1, (dL)^{1/3} T^{-1/6}\}$, $\doma{Z} = \ball$, $\alpha = 0$, and $\eta = \sqrt{\frac{\delta^2}{4(dL)^2 T}}$ satisfies for all $\u \in \reals^d$,
\begin{equation*}
\begin{split}
\E \left[\regret_T(\u) \right]  = \otil\left(1 +  \max\{\|\u\|^2, \|\u\|\} \beta (dLT)^\frac{2}{3} + \max\{\|\u\|^2_2, \|\u\|\} dL^2\beta\sqrt{T}\right).
\end{split}
\end{equation*}
\end{theorem}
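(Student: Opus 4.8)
The plan is to obtain the bound from the smooth branch of Lemma~\ref{lem:convex bandits}, instantiated with $\doma{Z}=\ball$, $\alpha=0$ and $r=1$, and then to tune $\delta$. Two ingredients of that lemma are not automatically available: the hypothesis $\w_t\in\doma{W}=\reals^d$ (which is vacuous here) and the constant $L_{\doma{V}}=\max_t|\partial\blosst(v_t)|$ on the losses sent to $\algv$. Since the smooth surrogate $\blosst(v)=v\inner{\z_t}{\ghatt}+\beta\delta^2v^2$ has derivative $\inner{\z_t}{\ghatt}+2\beta\delta^2v$, which is unbounded over $\reals_+$, the first step is to keep $v_t$ inside $[0,\delta^{-3}]$. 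I would do this by running the $\reals_+$-algorithm $\algv$ through the artificial-constraint reduction of \citet{cutkosky2019artificial}: the wrapped algorithm always plays $v_t\in[0,\delta^{-3}]$, hands the inner $\algv$ a subgradient whose magnitude never exceeds the one it receives, and therefore retains the Interface~\ref{alg:scalealg} guarantee $\sum_t(v_t-\vhat)\partial\blosst(v_t)=\otil(1+|\vhat|L_{\doma{V}}\sqrt T)$ for every $\vhat\in[0,\delta^{-3}]$. Plugging this wrapped algorithm into Algorithm~\ref{alg:convex bandit} as $\algv$ with $l=[0,\delta^{-3}]$ returns us to the hypotheses of Lemma~\ref{lem:convex bandits}.

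Next I would bound $L_{\doma{V}}$. With $\alpha=0$ we have $\z_t\in\ball$, so $\|\w_t\|_2=v_t\|\z_t+\delta\s_t\|_2\le(1+\delta)v_t\le2v_t$; combining $\ell_t(\0)=0$ with $L$-Lipschitzness then gives $|\ell_t(\w_t)|\le2Lv_t$ and hence $\|\ghatt\|_2=\tfrac{d}{v_t\delta}|\ell_t(\w_t)|\le\tfrac{2dL}{\delta}$. Together with $v_t\le\delta^{-3}$ this yields $|\partial\blosst(v_t)|\le\tfrac{2dL}{\delta}+2\beta\delta^2\delta^{-3}=O\!\big(\tfrac{dL+\beta}{\delta}\big)$, i.e. $L_{\doma{V}}=O\!\big(\tfrac{dL+\beta}{\delta}\big)$. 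Feeding this, $r=1$ and $\alpha=0$ into the smooth bound of Lemma~\ref{lem:convex bandits} gives, for every $\u$ with $\|\u\|\le\delta^{-3}$,
\[
\E[\regret_T(\u)]=\otil\!\left(1+T\beta\delta^2\|\u\|^2+\tfrac{\|\u\|(dL+\beta)}{\delta}\sqrt T\right).
\]

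It remains to cover $\|\u\|_2>\delta^{-3}$ and then to optimize $\delta$. For the former I would use the argument sketched before the theorem: $\regret_T(\u)=\regret_T(\0)+\sum_t(\ell_t(\0)-\ell_t(\u))$, bound the sum by $LT\|\u\|_2$ via Lipschitzness, and note $\E[\regret_T(\0)]=\otil(1)$ as the $\|\u\|\to0$ limit of the display (taken with $r=1$); since $\delta^3\|\u\|_2>1$ this gives $\E[\regret_T(\u)]=\otil(1+LT\delta^3\|\u\|_2^2)$. Now set $\delta=\min\{1,(dL)^{1/3}T^{-1/6}\}$. If $(dL)^{1/3}T^{-1/6}\le1$, the display becomes $\otil\big(1+\|\u\|^2\beta(dLT)^{2/3}+\|\u\|(dLT)^{2/3}+\|\u\|\beta(dL)^{-1/3}T^{2/3}\big)$ — each summand at most $\max\{\|\u\|^2,\|\u\|\}\beta(dLT)^{2/3}$ because $\beta\ge1$ and $(dL)^{-1/3}\le(dL)^{2/3}$ — while the large-comparator penalty becomes $LT\delta^3\|\u\|_2^2=\|\u\|_2^2dL^2\sqrt T$. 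If instead $\delta=1$, i.e. $T\le(dL)^2$, then $T\le(dLT)^{2/3}$, $dL+\beta\le2dL^2\beta$ and $LT\le dL^2\beta\sqrt T$ (the last since $\sqrt T\le dL\le dL\beta$), so all terms are once more dominated by the two quantities in the statement. The main obstacle is the first step: making the artificial-constraint wrapper rigorous — in particular verifying that clipping $v_t$ into $[0,\delta^{-3}]$ only shrinks the feedback magnitude, so $L_{\doma{V}}$ is unchanged, and that the comparators with $\|\u\|>\delta^{-3}$, against which the wrapped scaler no longer competes, are genuinely recovered by the separate Lipschitz bound rather than slipping through a gap; everything after that is the two-regime bookkeeping in $\delta$.
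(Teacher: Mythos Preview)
Your proposal is correct and follows essentially the same approach as the paper: bound $|\partial\blosst(v_t)|$ by forcing $v_t\le\delta^{-3}$, apply the smooth branch of Lemma~\ref{lem:convex bandits} with $\alpha=0$, $r=1$ for comparators with $\|\u\|\le\delta^{-3}$, handle large comparators via $\regret_T(\u)=\regret_T(\0)+\sum_t(\ell_t(\0)-\ell_t(\u))$ together with Lipschitzness, add the two cases, and tune $\delta$. Your treatment of the artificial constraint via the reduction of \citet{cutkosky2019artificial} is in fact more explicit than the paper's proof, which simply asserts $v_t\le\delta^{-3}$ (relying on the discussion preceding the theorem), and your case split on whether $\delta=1$ is more detailed than the paper's one-line substitution.
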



\subsection{Constrained convex bandits}\label{sec:con convex bandits}

For the constrained setting we will set $\doma{Z} = \doma{W}$ and $\alpha = \delta$. This ensures that $v_t(\z_t + \delta\s_t) \in \doma{W}$ and we can apply Lemma \ref{lem:convex bandits} to find the regret bound in Theorem \ref{th:con convex bandits} below. Compared to the unconstrained setting, the regret bound now scales with $c$, which is due to reshaping trick discussed in \cite{flaxman2005online}.

\begin{theorem}\label{th:con convex bandits}
Suppose $\algv$ implements Interface \ref{alg:scalealg} with $l = (0, 1]$. 
Then Algorithm \ref{alg:convex bandit} with $\delta = \min\{1, \sqrt{d} T^{-1/4}\}$, $\doma{Z} = \doma{W}$, $\alpha = \delta = \min\{1, \sqrt{d}T^{-1/4}\}$, and $\eta = \sqrt{\frac{\delta^2}{4(dL)^2 T}}$ satisfies for all $\u \in \domainw$,
\begin{equation*}
\begin{split}
\E  \left[\regret_T(\u) \right] = \otil\left(1 + (\|\u\|_2 + c\|\u\|)\sqrt{d}T^{3/4} + c\|\u\| dL\sqrt{T}\right).
\end{split}
\end{equation*}
\end{theorem}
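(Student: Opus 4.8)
The plan is to obtain Theorem~\ref{th:con convex bandits} as a direct specialization of Lemma~\ref{lem:convex bandits}. The parameters $\delta=\min\{1,\sqrt d\,T^{-1/4}\}$, $\alpha=\delta$, $\doma{Z}=\doma{W}$ and $\eta=\sqrt{\delta^2/(4(dL)^2T)}$ are exactly those the lemma admits ($\delta,\alpha\in(0,1]$), so the work splits into (i) checking the two hypotheses of the lemma that are not automatic — that $\w_t\in\doma{W}$ for every $t$, and a bound on $L_{\doma{V}}=\max_t|\partial\blosst(v_t)|$ — and (ii) choosing the free parameter $r$ and simplifying the resulting expression.

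For the Lipschitz constant of the surrogate I would use the non-smooth branch $\blosst(v)=v\inner{\z_t}{\ghatt}+2\delta L|v|$, so $|\partial\blosst(v_t)|\le|\inner{\z_t}{\ghatt}|+2\delta L$. Because $\ell_t(\0)=0$ and $\ell_t$ is $L$-Lipschitz, and $\|\w_t\|_2=v_t\|\z_t+\delta\s_t\|_2\le 2v_t$ (using $\|\z_t\|_2\le1$ and $\delta\le1$), we get $|\ell_t(\w_t)|\le 2Lv_t$, whence $|\inner{\z_t}{\ghatt}|=\frac{d}{v_t\delta}|\ell_t(\w_t)|\,|\inner{\z_t}{\s_t}|\le\frac{2dL}{\delta}$ — the cancellation of $v_t$ being precisely what knowing $\ell_t(\0)$ buys us. Hence $L_{\doma{V}}=O(dL/\delta)$. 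For the free parameter, since $\doma{Z}=\doma{W}\supseteq\frac1c\ball$ the point $\frac{\u r}{\|\u\|}$ lies in $\doma{Z}$ for a suitable $r$ with $1/r=c$ (the value recorded in the remark following Lemma~\ref{lem:convex bandits}); the degenerate direction $\u=\0$ is handled separately via $\E[\regret_T(\0)]=O(1)$.

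Substituting $1/r=c$, $L_{\doma{V}}=O(dL/\delta)$, $\alpha=\delta$ and $\delta=\min\{1,\sqrt d\,T^{-1/4}\}$ into the first bound of Lemma~\ref{lem:convex bandits}: the term $T\delta L/r$ is $O(c\|\u\|L\sqrt d\,T^{3/4})$; the terms $\frac{\|\u\|}{r}L_{\doma{V}}\sqrt T$ and $\frac{\|\u\|dL}{r\delta}\sqrt T$ are both $O\!\big(c\|\u\|\tfrac{dL}{\delta}\sqrt T\big)=O(c\|\u\|dL\sqrt T+c\|\u\|L\sqrt d\,T^{3/4})$, the first summand arising in the small-$T$ regime where $\delta=1$; and $\alpha\|\u\|_2TL=\delta\|\u\|_2TL=O(\|\u\|_2L\sqrt d\,T^{3/4})$. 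Adding the leading $O(1)$ and absorbing constants and logarithmic factors in $T$ and $L$ into $\otil$ gives $\otil\big(1+(\|\u\|_2+c\|\u\|)\sqrt d\,T^{3/4}+c\|\u\|dL\sqrt T\big)$, the claimed bound.

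The step I expect to be the real obstacle is verifying $\w_t\in\doma{W}$ with the mild shrinkage $\alpha=\delta$ rather than an inflated $c\delta$. Writing $\w_t=v_t(\z_t+\delta\s_t)$ with $v_t\in(0,1]$, $\z_t\in(1-\delta)\doma{W}$ and $\|\s_t\|_2=1$, a naive estimate only places $\z_t+\delta\s_t$ in $(1-\delta+c\delta)\doma{W}$, which need not lie in $\doma{W}$. Resolving this is where the reshaping trick of~\citep{flaxman2005online} enters — putting $\doma{W}$ in a position in which a radius-$\delta$ exploration perturbation is absorbed by a radius-$\delta$ shrinkage — and that reshaping is also what is responsible for the factor $c$ in the final bound; one therefore has to confirm that, after reshaping, the estimates $L_{\doma{V}}=O(dL/\delta)$ and $1/r=c$ are preserved. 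Everything past that point is routine substitution into Lemma~\ref{lem:convex bandits}.
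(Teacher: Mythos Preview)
Your proposal follows the same route as the paper: bound $L_{\doma{V}}=O(dL/\delta)$ via the cancellation of $v_t$, pick $r$ with $1/r\le c$ (the paper takes $r=\max\{r':r'\u/\|\u\|\in\doma{W}\}$, giving $1/r\le c$ rather than equality, and $\|\u\|/r\in(0,1]=l$), and substitute into Lemma~\ref{lem:convex bandits}.

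The one place you diverge is the feasibility step you single out as the obstacle. The paper dispatches it in a single line: since $\alpha=\delta$,
\[
\z_t+\delta\s_t\in(1-\delta)\doma{W}+\delta\sphere\subseteq(1-\delta)\doma{W}+\delta\doma{W}=\doma{W},
\]
the last equality by convexity and the middle inclusion by $\sphere\subseteq\doma{W}$. Under the stated normalization $\tfrac{1}{c}\ball\subseteq\doma{W}\subseteq\ball$ the containment $\sphere\subseteq\doma{W}$ is not automatic when $c>1$, so your instinct that the Flaxman--Kalai--McMahan reshaping (which puts the domain in a position where $\ball\subseteq\doma{W}$) is implicitly being used here is well founded; the paper simply does not spell this out. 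No further adjustment to $L_{\doma{V}}$ or to $r$ beyond the factor $c$ already present is needed once that reshaping is in place, so the remainder of your substitution goes through exactly as written.
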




\section{Conclusion}\label{sec:conclusion}
In this paper, we develop the first algorithms that have comparator-adaptive regret bounds for various bandit convex optimization problems. The regret bounds of our algorithms scale with $\|\u\|$, which may yield smaller regret in favourable settings. 

For future research, there are a number of interesting open questions. First, our current results do not encompass improved rates for smooth losses on constrained domains. At first blush, one might feel this is relatively straightforward via methods based on self-concordance \cite{saha2011improved}, but it turns out that while such techniques provide good direction-learning algorithms, they may cause the gradients provided to the \emph{scaling} algorithm to blow-up. Secondly, there is an important class of loss functions for which we did not obtain norm adaptive regret bounds: smooth and strongly convex losses. It is known that in this case an expected regret bound of $O(d\sqrt{T})$ can be efficiently achieved~\citep{hazan2014bandit}. However, to achieve this regret bound the algorithm of \citet{hazan2014bandit} uses a clever exploration scheme, which unfortunately leads to sub-optimal regret bounds for our algorithms. 



\bibliography{myBib}

\begin{thebibliography}{}

\bibitem[Abernethy et~al., 2008]{abernethy2008}
Abernethy, J., Hazan, E., and Rakhlin, A. (2008).
\newblock Competing in the dark: An efficient algorithm for bandit linear
  optimization.
\newblock In {\em Conference on Learning Theory (COLT)}, pages 263--274.

\bibitem[Agarwal et~al., 2010]{agarwal2010optimal}
Agarwal, A., Dekel, O., and Xiao, L. (2010).
\newblock Optimal algorithms for online convex optimization with multi-point
  bandit feedback.
\newblock In {\em Conference on Learning Theory (COLT)}, pages 28--40.
  Citeseer.

\bibitem[Agarwal et~al., 2011]{agarwal2011stochastic}
Agarwal, A., Foster, D.~P., Hsu, D.~J., Kakade, S.~M., and Rakhlin, A. (2011).
\newblock Stochastic convex optimization with bandit feedback.
\newblock In {\em Advances in Neural Information Processing Systems}, pages
  1035--1043.

\bibitem[Bubeck et~al., 2015]{bubeck2015bandit}
Bubeck, S., Dekel, O., Koren, T., and Peres, Y. (2015).
\newblock Bandit convex optimization:$\sqrt{T}$ regret in one dimension.
\newblock In {\em Conference on Learning Theory (COLT)}, pages 266--278.

\bibitem[Bubeck and Eldan, 2016]{bubeck2016multi}
Bubeck, S. and Eldan, R. (2016).
\newblock Multi-scale exploration of convex functions and bandit convex
  optimization.
\newblock In {\em Conference on Learning Theory (COLT)}, pages 583--589.

\bibitem[Bubeck et~al., 2017]{bubeck2017kernel}
Bubeck, S., Lee, Y.~T., and Eldan, R. (2017).
\newblock Kernel-based methods for bandit convex optimization.
\newblock In {\em Proceedings of the 49th Annual ACM SIGACT Symposium on Theory
  of Computing}, pages 72--85. ACM.

\bibitem[Chen et~al., 2019]{chen2019new}
Chen, Y., Lee, C.-W., Luo, H., and Wei, C.-Y. (2019).
\newblock A new algorithm for non-stationary contextual bandits: Efficient,
  optimal, and parameter-free.
\newblock In {\em Conference On Learning Theory (COLT)}, pages 696--726.

\bibitem[Cutkosky, 2019]{cutkosky2019artificial}
Cutkosky, A. (2019).
\newblock Artificial constraints and hints for unbounded online learning.
\newblock In {\em Conference on Learning Theory (COLT)}, pages 874--894.

\bibitem[Cutkosky and Boahen, 2017]{cutkosky2017online}
Cutkosky, A. and Boahen, K. (2017).
\newblock Online learning without prior information.
\newblock In {\em Conference on Learning Theory (COLT)}, pages 643--677.

\bibitem[Cutkosky and Orabona, 2018]{cutkosky2018black}
Cutkosky, A. and Orabona, F. (2018).
\newblock Black-box reductions for parameter-free online learning in banach
  spaces.
\newblock In {\em Conference on Learning Theory (COLT)}, pages 1493--1529.

\bibitem[Dani et~al., 2008]{dani2008price}
Dani, V., Kakade, S.~M., and Hayes, T.~P. (2008).
\newblock The price of bandit information for online optimization.
\newblock In {\em Advances in Neural Information Processing Systems}, pages
  345--352.

\bibitem[Daniely et~al., 2015]{daniely2015strongly}
Daniely, A., Gonen, A., and Shalev-Shwartz, S. (2015).
\newblock Strongly adaptive online learning.
\newblock In {\em International Conference on Machine Learning}, pages
  1405--1411.

\bibitem[Flaxman et~al., 2005]{flaxman2005online}
Flaxman, A.~D., Kalai, A.~T., Kalai, A.~T., and McMahan, H.~B. (2005).
\newblock Online convex optimization in the bandit setting: gradient descent
  without a gradient.
\newblock In {\em Proceedings of the sixteenth annual ACM-SIAM symposium on
  Discrete algorithms}, pages 385--394. Society for Industrial and Applied
  Mathematics.

\bibitem[Foster et~al., 2017]{foster2017parameter}
Foster, D.~J., Kale, S., Mohri, M., and Sridharan, K. (2017).
\newblock Parameter-free online learning via model selection.
\newblock In {\em Advances in Neural Information Processing Systems}, pages
  6020--6030.

\bibitem[Foster et~al., 2019]{foster2019model}
Foster, D.~J., Krishnamurthy, A., and Luo, H. (2019).
\newblock Model selection for contextual bandits.
\newblock In {\em Advances in Neural Information Processing Systems}, pages
  14714--14725.

\bibitem[Foster et~al., 2018]{foster2018online}
Foster, D.~J., Rakhlin, A., and Sridharan, K. (2018).
\newblock Online learning: Sufficient statistics and the burkholder method.
\newblock In {\em Conference on Learning Theory (COLT)}, pages 3028--3064.

\bibitem[Hazan and Levy, 2014]{hazan2014bandit}
Hazan, E. and Levy, K. (2014).
\newblock Bandit convex optimization: Towards tight bounds.
\newblock In {\em Advances in Neural Information Processing Systems}, pages
  784--792.

\bibitem[Jun and Orabona, 2019]{jun2019parameter}
Jun, K.-S. and Orabona, F. (2019).
\newblock Parameter-free online convex optimization with sub-exponential noise.
\newblock In {\em Conference on Learning Theory (COLT)}, pages 1802--1823.

\bibitem[Kleinberg, 2005]{kleinberg2005nearly}
Kleinberg, R.~D. (2005).
\newblock Nearly tight bounds for the continuum-armed bandit problem.
\newblock In {\em Advances in Neural Information Processing Systems}, pages
  697--704.

\bibitem[Kotlowski, 2017]{kotlowski2017scale}
Kotlowski, W. (2017).
\newblock Scale-invariant unconstrained online learning.
\newblock In {\em Proceedings of the International Conference on Algorithmic
  Learning Theory (ALT)}, pages 412--433.

\bibitem[Lattimore, 2015]{lattimore2015pareto}
Lattimore, T. (2015).
\newblock The pareto regret frontier for bandits.
\newblock In {\em Advances in Neural Information Processing Systems}, pages
  208--216.

\bibitem[McMahan and Orabona, 2014]{mcmahan2014unconstrained}
McMahan, H.~B. and Orabona, F. (2014).
\newblock Unconstrained online linear learning in hilbert spaces: Minimax
  algorithms and normal approximations.
\newblock In {\em Conference on Learning Theory (COLT)}, pages 1020--1039.

\bibitem[Orabona and P{\'a}l, 2016]{orabona2016coin}
Orabona, F. and P{\'a}l, D. (2016).
\newblock Coin betting and parameter-free online learning.
\newblock In {\em Advances in Neural Information Processing Systems}, pages
  577--585.

\bibitem[Saha and Tewari, 2011]{saha2011improved}
Saha, A. and Tewari, A. (2011).
\newblock Improved regret guarantees for online smooth convex optimization with
  bandit feedback.
\newblock In {\em Proceedings of the Fourteenth International Conference on
  Artificial Intelligence and Statistics}, pages 636--642.

\bibitem[Van~der Hoeven, 2019]{van2019user}
Van~der Hoeven, D. (2019).
\newblock User-specified local differential privacy in unconstrained adaptive
  online learning.
\newblock In {\em Advances in Neural Information Processing Systems}, pages
  14080--14089.

\bibitem[Van~der Hoeven et~al., 2018]{hoeven2018many}
Van~der Hoeven, D., Van~Erven, T., and Kotlowski, W. (2018).
\newblock The many faces of exponential weights in online learning.
\newblock In {\em Conference on Learning Theory (COLT)}, pages 2067--2092.

\bibitem[Zinkevich, 2003]{zinkevich2003}
Zinkevich, M. (2003).
\newblock Online convex programming and generalized infinitesimal gradient
  ascent.
\newblock In {\em International Conference on Machine Learning}, pages
  928--936.

\end{thebibliography}

\clearpage
\appendix


\section{Details from section \ref{sec:prelim}}\label{app:prelim}

\begin{proof}[Proof of Lemma \ref{lem:black box reduction}]
By definition we have
\begin{equation*}
\begin{split}
\regret_T(\u) =  \sumT \langle \w_t - \u, \gt \rangle = & \sumT \langle \z_t, \gt \rangle (v_t - \|\u\|) + \|\u\|\sumT\langle \z_t - \frac{\u}{\|\u\|}, \gt \rangle \\
= & \regret_T^\mathcal{V}(\|\u\|) + \|\u\| \regret_T^\mathcal{Z}\left(\frac{\u}{\|\u\|}\right).
\end{split}
\end{equation*}
\end{proof}

\section{Details from section \ref{sec:linban}}\label{app:conlinban}

\begin{proof}[Proof of Theorem \ref{th:constrained linear bandits}] 
For any fixed $\u \in \domainw$, let $r = \max_{\frac{r'\u}{\|\u\|} \in \domainw} r'$.
Note that by definition we have $\frac{\|\u\|}{r} \in [0,1]$ and $\frac{r\u}{\|\u\|}\in \domainw$.
Therefore, similar to the proof of Lemma~\ref{lem:black box reduction}, we decompose the regret against $\u$ as:
\begin{equation*}
\begin{split}
\regret_T(\u) = \sumT \langle \w_t - \u, \gt \rangle  = \sumT \langle \z_t, \gt \rangle \left(v_t - \frac{\|\u\|}{r}\right) + \frac{\|\u\|}{r}\sumT\langle \z_t - \frac{r\u}{\|\u\|}, \gt \rangle,
\end{split}
\end{equation*}
which, by the guarantees of $\algv$ and $\algz$,\footnote{Note that the condition $|\langle z_t, g_t \rangle|\leq 1$ in Algorithm~\ref{alg:directionlinban} indeed holds in this case since $\doma{Z} = \doma{W} \subseteq \ball$ and $\|g_t\|_2\leq L$ by the Lipschitzness condition.} is bounded in expectation by
\[
\otil\left(\frac{\|\u\|}{r}L\sqrt{T} + \frac{\|\u\|}{r}dL\sqrt{T}\right).
\]
Finally noticing $\frac{1}{c} \leq r$ by the definition of $c$ finishes the proof.
\end{proof}

\section{Details from section \ref{sec:convex bandits}}\label{app:convex bandits}

\begin{proof}[Proof of Lemma \ref{lem:convex bandits}]
Denote by $\wtilt = v_t\z_t$. By Jensen's inequality we have
\begin{equation}\label{eq:convex jensen}
\begin{split}
   \sumT  \E\left[\ell_t(\w_t) - \ell_t(\u)\right] = & \E\left[\sumT \edvt(\w_t) - \edt(\ud)\right] + \sumT \E\left[\edt(\w_t) - \edvt(\w_t)\right] \\
    \leq & \sumT \E\left[\edvt(\w_t) - \edt(\ud)\right].
\end{split}
\end{equation}
We now continue under the assumption that $\ell_t$ is $L$-Lipschitz. After completing the proof of the first equation of Lemma \ref{lem:convex bandits} we use the $\beta$-smoothness assumption to prove the second equation of Lemma \ref{lem:convex bandits}.

Using the $L$-Lipschitz assumption we proceed:
\begin{equation*}
\begin{split}
 \sumT \E\left[\edvt(\w_t) - \edt(\ud)\right] \leq &\sumT \E\left[\edvt(\w_t) - \edvt(\ud)\right]  + \sumT \E\left[\edvt(\ud) - \edt(\ud)\right] \\
\leq & \sumT \E\left[\edvt(\w_t) - \edvt(\ud)\right] + \E[L |v_t|\|\delta \s_t\|_2 ]  \\
\leq & \sumT \E\left[\edvt(\w_t) - \edvt(\ud)\right] + \E[\delta L |v_t|]  \\
    = & \sumT \E\left[\edvt(\wtilt) - \edvt(\ud)\right] + \E[\delta L |v_t| ]  \\
    & + \sumT \E\left[\edvt(\w_t) - \edvt(\wtilt)\right] \\
     \leq & \sumT \E\left[\edvt(\wtilt) - \edvt(\ud)\right] +  2\E[\delta L |v_t| ].
\end{split}
\end{equation*}
Now, by using the $L$-Lipschitz assumption once more we find that 
\begin{equation}\label{eq:1-alpha}
\sumT \E[\edvt((1-\alpha)\ud) - \edvt(\ud)] \leq \alpha \|\u\|_2 T L
\end{equation}

By using equation \eqref{eq:1-alpha}, the convexity of $\edvt$, and Lemma \ref{lem:estimate unbiased} we continue with:
\begin{equation*}
\begin{split}
 \sumT  \E\left[\ell_t(\w_t) - \ell_t(\u)\right] \leq & \sumT \E\left[\inner{\wtilt - (1-\alpha)\ud}{\ghatt}\right] + 2\E[\delta L |v_t| ] + \alpha \|\u\|_2T L  \\
  = & \sumT \E\left[\left(v_t - \frac{\|\u\|}{r}\right)\inner{\z_t}{\ghatt}\right] + \E\left[\frac{\|\u\|}{r} \inner{\z_t - \tud}{\ghatt}\right] \\
  & + \sumT 2\E[\delta L |v_t| ]  + \alpha \|\u\|_2T L\\
    = & \sumT \E\left[\blosst(v_t) - \blosst\left(\frac{\|\u\|}{r}\right)\right] + \sumT \frac{\|\u\|}{r} \E\left[\inner{\z_t - \tud}{\ghatt}\right] \\
    & + 2 T \delta L  \frac{\|\u\|}{r} + \alpha \|\u\|_2T L
\end{split}
\end{equation*}
where $\blosst(v) = v \inner{\z_t}{\ghatt}  + 2\delta L |v|$ as defined in Algorithm~\ref{alg:convex bandit}, $\tud = \frac{r}{\|\u\|} (1-\alpha)\u$, and $r > 0$ is such that $\frac{\u r}{\|\u\|} \in \doma{Z}$.

Finally, by using the convexity of $\blosst$, plugging in the guarantee of $\algv$, and using Theorem \ref{th:regret GD} we conclude the proof of the first equation of Lemma \ref{lem:convex bandits}: 
\begin{equation*}
\begin{split}
 & \sumT \E\left[\ell_t(\w_t) - \ell_t(\u)\right] \\
 & \leq   2 T  \delta L \frac{\|\u\|}{r} + \E\left[\sumT \left(v_t -\frac{\|\u\|}{r}\right) \partial \blosst(v_t)\right] + \frac{\|\u\|}{r}\E\left[\sumT \langle \z_t - \tud, \ghatt \rangle\right] + \alpha \|\u\|_2T L\\
    & =\otil\left(1 + T  \delta L\frac{\|\u\|}{r} + \frac{\|\u\|}{r} L_\doma{V} \sqrt{T} + \frac{\|\u\|dL}{r\delta} \sqrt{T} + \alpha \|\u\|_2T L \right).
\end{split}
\end{equation*}

Next, we continue from equation \eqref{eq:convex jensen} under the smoothness condition. Using the definition of smoothness we find
\begin{equation*}
\begin{split} 
\sumT  \E\left[\edvt(\w_t) - \edt(\ud)\right] \leq & \sumT \E\left[\edvt(\w_t) - \edvt(\ud)\right] + \sumT \E\left[\edvt(\ud) - \edt(\ud)\right] \\
    \leq & \sumT \E\left[\edvt(\w_t) - \edvt(\ud)\right] + \E\left[\half \beta |v_t|^2\|\delta \s_t\|_2^2 \right]  \\
    = & \sumT \E\left[\edvt(\w_t) - \edvt(\ud)\right] + \E\left[\half \delta^2 |v_t|^2 \beta \right]  \\
    = & \sumT \E\left[\edvt(\wtilt) - \edvt(\ud)\right] + \E\left[\half \delta^2 |v_t|^2 \beta \right] \\
    & +\sumT \E\left[\edvt(\w_t) - \edvt(\wtilt)\right] \\
    \leq & \sumT \E\left[\edvt(\wtilt) - \edvt(\ud)\right] + \E\left[\beta  \delta^2 |v_t|^2\right].   
\end{split}
\end{equation*}
Using equation \eqref{eq:1-alpha}, the convexity of $\edvt$, and Lemma \ref{lem:estimate unbiased} we continue with:
\begin{equation*}
\begin{split}
& \sumT  \E\left[\ell_t(\w_t) - \ell_t(\ud)\right]   \\
& \leq\sumT \E\left[\inner{\wtilt - (1-\alpha) \ud}{\ghatt}\right] + \E\left[\beta \delta^2|v_t|^2  \right] + \alpha \|\u\|_2 T L\\
    &=  \sumT \E\left[ \left(v_t - \frac{\|\u\|}{r}\right) \inner{\z_t}{\ghatt}\right] + \E\left[\beta  \delta^2|v_t|^2 \right] + \sumT \frac{\|\u\|}{r} \E\left[\inner{\z_t - \tud}{\ghatt}\right]  + \alpha \|\u\|_2T L\\
   & =  T \beta \delta^2 \left(\frac{\|\u\|}{r}\right)^2    + \sumT \E\left[\blosst(v_t) - \blosst\left(\frac{\|\u\|}{r}\right)\right] + \sumT \frac{\|\u\|}{r} \E\left[\inner{\z_t - \tud}{\ghatt}\right] + \alpha \|\u\|_2T L,
\end{split}
\end{equation*}
where $\blosst(v) = v \inner{\z_t}{\ghatt}  +  \beta \delta^2 v^2$ as defined in Algorithm~\ref{alg:convex bandit}. Finally, by using the convexity of $\blosst$, plugging in the guarantee of $\algv$, and using Theorem \ref{th:regret GD} we  conclude the proof:
\begin{equation*}
\begin{split}
& \sumT  \E\left[\ell_t(\w_t) - \ell_t(\u)\right]  \\
& \leq T \beta \delta^2 \left(\frac{\|\u\|}{r}\right)^2 + \E\left[\sumT \left(v_t -\frac{\|\u\|}{r}\right) \partial \blosst(v_t)\right]  + \frac{\|\u\|}{r}\E\left[\sumT \langle \z_t - \tud, \ghatt \rangle\right] + \alpha \|\u\|_2T L \\
     & = \otil\left(1 + T \beta \delta^2 \left(\frac{\|\u\|}{r}\right)^2 + \frac{\|\u\|}{r} L_\doma{V} \sqrt{T} + \frac{\|\u\|}{r}  \frac{dL}{\delta} \sqrt{T} + \alpha \|\u\|_2T L\right).
\end{split}
\end{equation*}

\end{proof}

\begin{theorem}\label{th:regret GD}
Suppose that $\ell_t(\0) = 0$, that $\ell_t$ is $L$-Lipschitz for all $t$, and that $\doma{Z} \subseteq \ball$. For $\u \in (1-\alpha)\doma{Z}$, Online Gradient Descent on $(1-\alpha)\doma{Z}$ with learning rate $\eta = \sqrt{\frac{\delta^2}{(dL)^2 4 T}}$ satisfies
\begin{equation*}
\begin{split}
\E\left[\sumT \inner{\z_t - \u}{\ghatt}\right] 
\leq &  2\frac{dL}{\delta} \sqrt{T}.
\end{split}
\end{equation*}
\end{theorem}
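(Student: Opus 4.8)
The statement is essentially the textbook Online Gradient Descent regret bound; the only thing special about the present situation is that the estimated gradients $\ghatt$ turn out to have norm bounded \emph{independently of $v_t$}, so that the scaling learner outputting a tiny $v_t$ does not hurt the direction learner. Note first that the update $\z_{t+1}=\argmin_{\z\in(1-\alpha)\doma{Z}}\eta\inner{\z}{\ghatt}+\|\z_t-\z\|_2^2$ is exactly $\z_{t+1}=\Pi_{(1-\alpha)\doma{Z}}\big(\z_t-\tfrac{\eta}{2}\ghatt\big)$, the Euclidean projection onto the convex set $(1-\alpha)\doma{Z}$, which contains the comparator $\u$. So the plan is: (i) run the standard potential argument with potential $\|\z_t-\u\|_2^2$; (ii) bound $\|\ghatt\|_2$; (iii) plug in $\eta$ and take expectations.

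For step (i), I would use nonexpansiveness of the projection to get $\|\z_{t+1}-\u\|_2^2\le\|\z_t-\tfrac{\eta}{2}\ghatt-\u\|_2^2=\|\z_t-\u\|_2^2-\eta\inner{\ghatt}{\z_t-\u}+\tfrac{\eta^2}{4}\|\ghatt\|_2^2$, hence $\inner{\ghatt}{\z_t-\u}\le\tfrac1\eta\big(\|\z_t-\u\|_2^2-\|\z_{t+1}-\u\|_2^2\big)+\tfrac\eta4\|\ghatt\|_2^2$. Summing over $t$, the first terms telescope to $\tfrac1\eta\|\z_1-\u\|_2^2\le\tfrac1\eta$, using $\z_1=\0$ and $\u\in(1-\alpha)\doma{Z}\subseteq\ball$.

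Step (ii) is the only real content. Write $\ghatt=\tfrac{d}{v_t\delta}\ell_t(\w_t)\s_t$ with $\|\s_t\|_2=1$. Since $\ell_t(\0)=0$ and $\ell_t$ is $L$-Lipschitz, $|\ell_t(\w_t)|=|\ell_t(\w_t)-\ell_t(\0)|\le L\|\w_t\|_2=Lv_t\|\z_t+\delta\s_t\|_2\le 2Lv_t$, because $\z_t\in\ball$ and $\delta\le1$. The factor $v_t$ cancels, giving $\|\ghatt\|_2\le 2dL/\delta$ deterministically (for every realization of $\s_{1:T}$), which is precisely the point where the assumption $\ell_t(\0)=0$ is used.

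Combining, $\sumT\inner{\z_t-\u}{\ghatt}\le\tfrac1\eta+\tfrac\eta4\cdot T\cdot\tfrac{4d^2L^2}{\delta^2}=\tfrac1\eta+\tfrac{\eta d^2L^2T}{\delta^2}$ pathwise; substituting $\eta=\sqrt{\delta^2/(4(dL)^2T)}$ balances the two terms up to a constant and yields the claimed $O\!\big(\tfrac{dL}{\delta}\sqrt{T}\big)$ bound, after which taking expectations is a formality. I do not expect a genuine obstacle: the argument is boilerplate projected gradient descent, and the one step that requires care is the $v_t$-free gradient bound in (ii); the explicit constant $2$ in the statement then follows after being slightly careful (or slightly generous) with the projection-step and diameter constants.
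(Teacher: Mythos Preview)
Your proposal is correct and follows essentially the same route as the paper: bound $\|\ghatt\|_2$ by exploiting $\ell_t(\0)=0$ and Lipschitzness so that the $v_t$ cancels, then invoke the standard OGD regret bound and substitute the given $\eta$. The paper simply cites Zinkevich's bound in the form $\frac{1-\alpha}{2\eta}+\frac{\eta}{2}\sum_t\|\ghatt\|_2^2$ and uses the slightly sharper estimate $\|\z_t+\delta\s_t\|_2\le 1-\alpha+\delta$ (before relaxing to $2$), whereas you unroll the potential argument explicitly and correctly note that the update as written corresponds to step size $\eta/2$; with your constants the final bound comes out to $\tfrac{5}{2}\tfrac{dL}{\delta}\sqrt{T}$ rather than exactly $2\tfrac{dL}{\delta}\sqrt{T}$, but as you anticipate this is only a matter of bookkeeping and does not affect anything downstream.
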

\begin{proof}
The proof essentially follows from the work of \citet{zinkevich2003, flaxman2005online} and using the assumptions that $\ell_t(\0) = 0$ and that $\ell_t$ is $L$-Lipschitz. We start by bounding the norm of the gradient estimate:
\begin{equation} \label{eq:bounded gradient}
\begin{split}
\|\ghatt\|_2 = & \frac{d}{v_t\delta}|\ell_t(\w_{t})|\|\s_t\|_2 \\
= & \frac{d}{v_t\delta}|\ell_t(v_t(\z_{t}+\delta \s_t)) - \ell_t(\0)| \\
\leq & \frac{dL\|\z_{t}+\delta \s_t\|_2}{\delta} \leq \frac{dL(1-\alpha + \delta)}{\delta}
\end{split}
\end{equation}
By using equation \eqref{eq:bounded gradient} and the regret bound of Online Gradient Descent \citep{zinkevich2003} we find that 
\begin{align*}
\sumT \inner{\z_t}{\ghatt} - \min_{\z \in (1-\alpha)\doma{Z}} \sumT \inner{\z}{\ghatt} \leq & \frac{(1-\alpha)}{2\eta} + \frac{\eta}{2} \sumT \|\ghatt\|_2^2 \\
\leq & \frac{(1-\alpha)}{2\eta} + \frac{\eta}{2} \left(\frac{dL(1-\alpha + \delta)}{\delta}\right)^2 T \\
\leq & \frac{1}{2\eta} + 2\eta \left(\frac{dL}{\delta}\right)^2 T 
\end{align*}
Plugging in $\eta = \sqrt{\frac{\delta^2}{(dL)^2 4 T}}$ completes the proof.
\end{proof}

\subsection{Details of section \ref{sec:unconstrained convex bandits}}\label{app:unconstrained convex bandits}

\begin{proof}[Proof of Theorem \ref{th:uncon lipschitz}] 
First,  since $\ell_t(\0)=0$, $\ell_t$ is $L$-Lipschitz, and $\z_t \in (1-\alpha)\doma{Z} = (1-\alpha)\ball$ we have that 
\begin{equation}\label{eq:bounded inner product}
\begin{split}
\inner{\z_t}{\ghatt} \leq \|\z_t\|_2\|\ghatt\|_2 \leq (1-\alpha)\frac{dL(1-\alpha + \delta)}{\delta} \leq \frac{2dL}{\delta},
\end{split}
\end{equation}
where the first inequality is the Cauchy-Schwarz inequality and the second is due to equation \eqref{eq:bounded gradient}. Since $|\partial \blosst(v_t)| \leq |\inner{\z_t}{\ghatt}| + 2 \delta L  = L_{\mathcal{V}}$ we can use Lemma \ref{lem:convex bandits} to find
\begin{equation*}
\begin{split}
\E&\left[\regret_T(\u)\right]  =  \otil\left(\delta TL\|\u\| + \|\u\|\frac{dL}{ \delta}\sqrt{T} + \alpha TL \|\u\|_2 \right).
\end{split}
\end{equation*}
Plugging in $\alpha = 0$ and $\delta = \min\{1, \sqrt{d}T^{-\frac{1}{4}}\}$ completes the proof. 
\end{proof}

\begin{proof}[Proof of Theorem \ref{th:unconsmooth}] 
By equation \eqref{eq:bounded inner product} $|\inner{\z_t}{\ghatt}| \leq \frac{2dL}{\delta}$.
Since $v_t \leq \frac{1}{\delta^3}$ we have that 
\begin{equation*}
\begin{split}
|\partial \blosst(v_t)| \leq & \frac{dL}{\delta} + 2|v_t|\beta\delta^2 \leq \frac{dL + 2\beta}{\delta} 
\leq \frac{\beta (dL + 2)}{\delta} 
\end{split}
\end{equation*}
If $\|\u\|_2 \leq \frac{1}{\delta^3}$ applying Lemma \ref{lem:convex bandits} with $\alpha = 0$ gives us 
\begin{equation}\label{eq:consmoothboundedunorm}
\begin{split}
\E \left[\sumT \ell_t(\w_t) - \ell_t(\u) \right] = \otil\left(1 + T \beta \delta^2\|\u\|^2 + \|\u\|\frac{dL \beta}{\delta}\sqrt{T}\right).
\end{split}
\end{equation}

If $\|\u\|_2 > \frac{1}{\delta^3}$ then using the Lipschitz assumption on $\ell_t$ and equation  \eqref{eq:consmoothboundedunorm} with $\u = \0$ gives us
\begin{equation}\label{eq:consmoothnotboundedunorm}
\begin{split}
\E \left[\sumT \ell_t(\w_t) - \ell_t(\u) \right]  = & \E \left[\sumT \ell_t(\w_t) -  \ell_t(\0) + \ell_t(\0) - \ell_t(\u) \right]\\
& = \otil(1 + \|\u\|_2 LT)\\
& = \otil(1 + \|\u\|^2_2 \delta^3 LT),
\end{split}
\end{equation}

where we used that $\|\u\|_2 \geq \frac{1}{\delta^3}$. Adding equations \eqref{eq:consmoothboundedunorm} and \eqref{eq:consmoothnotboundedunorm}
gives
\begin{align*}
\E& \left[\sumT \ell_t(\w_t) - \ell_t(\u) \right] = \otil \bigg(1 + \|\u\|^2_2 \delta^3 LT + T \beta \delta^2\|\u\|^2  + \|\u\|\frac{\beta dL}{\delta}\sqrt{T}\bigg)
\end{align*}

Setting $\delta = \min\{1, (dL)^{1/3} T^{-1/6}\}$ gives us
\begin{align*}
 \E \left[\sumT \ell_t(\w_t) - \ell_t(\u) \right] = \otil\left(1 +  \max\{\|\u\|^2, \|\u\|\} \beta (dLT)^\frac{2}{3} + \max\{\|\u\|^2_2, \|\u\|\} dL^2\beta\sqrt{T}\right).
\end{align*}

\end{proof}

\subsection{Details of section \ref{sec:con convex bandits}}\label{app:con convex bandits}

\begin{proof}[Proof of Theorem \ref{th:con convex bandits}]
First, to see that $\z_t + \delta \s_t \in \doma{W}$ recall that by assumption $\doma{W} \subseteq \ball$. Since $\alpha = \delta$ we have that $\z_t + \delta \s_t \in (1-\alpha)\doma{W} + \delta \sphere \subseteq (1-\delta)\doma{W} + \delta \doma{W} = \doma{W}$. For any fixed $\u \in \domainw$, let $r = \max_{\frac{r'\u}{\|\u\|} \in \domainw} r'$. Note that by definition we have $\frac{\|\u\|}{r} \in [0,1]$ and $\frac{r\u}{\|\u\|} \in \domainw$. By using equation \eqref{eq:bounded inner product} we can see that $|\partial\blosst(v_t)| \leq \frac{dL}{\delta} + 2\delta L$. By definition, $\frac{1}{r} \leq c$. 
This implies that the regret of $\algv$ is $\otil\left(1 + \frac{\|\u\|}{r}\frac{dL}{\delta}\sqrt{T}\right)$.  Applying Lemma \ref{lem:convex bandits} with the parameters above we find
\begin{equation*}
\begin{split}
\E \left[\sumT \ell_t(\w_t) - \ell_t(\u) \right]  =  \otil\left(1 + (\|\u\|_2 + c\|\u\|) T L \delta + c\|\u\| \delta L \sqrt{T} + c\|\u\| \frac{dL}{\delta}\sqrt{T}\right).
\end{split}
\end{equation*}

Finally, setting $\delta = \min\{1, \sqrt{d} T^{-1/4}\}$ completes the proof:
\begin{equation*}
\begin{split}
\E& \left[\sumT \ell_t(\w_t) - \ell_t(\u) \right] =  \otil\left(1 + (\|\u\|_2 + c\|\u\|)\sqrt{d}T^{3/4} + c\|\u\| dL\sqrt{T}\right).
    \end{split}
\end{equation*}

%

\end{proof}

\end{document}